\documentclass[11pt]{article}
\usepackage[margin=2.55cm]{geometry}
\usepackage{amsmath,amssymb,amsthm,mathtools}
\usepackage{graphicx,booktabs,enumitem}
\usepackage{placeins}
\usepackage{microtype}\microtypesetup{expansion=false}
\usepackage{xcolor}
\usepackage[colorlinks=true,linkcolor=blue!45!black,citecolor=blue!45!black,urlcolor=blue!45!black]{hyperref}
\usepackage[numbers,sort&compress]{natbib}

\newtheorem{theorem}{Theorem}[section]

\newtheorem{corollary}[theorem]{Corollary}
\newtheorem{lemma}[theorem]{Lemma}
\theoremstyle{definition}
\newtheorem{definition}[theorem]{Definition}

\theoremstyle{remark}
\newtheorem{remark}[theorem]{Remark}

\newcommand{\R}{\mathbb{R}}
\newcommand{\ghat}{\widehat g}

\title{\bfseries Fractional Laplace Neural Operators:\\
Exact Architectures, an Expressivity Frontier at Criticality,\\
and Certified Stability for Memory-Driven Network Dynamics}
\author{Mauricio Herrera-Mar\'in\\[1mm]
\small Facultad de Ingenier\'ia, Universidad del Desarrollo, Santiago, Chile\\
\small \texttt{mherrera@udd.cl}}
\date{}

\begin{document}
\maketitle

\begin{abstract}
Neural operators learn maps between function spaces, while hereditary network
dynamics are naturally described by Volterra resolvents with non-rational
Laplace symbols. We introduce a fractional Laplace neural operator (fLNO) that
embeds this structure directly in the learned map. For commuting
excitation--Laplacian pairs, one block graph-spectral layer represents the full
linear Volterra solution operator exactly. We establish an expressivity
frontier for finite rational realizations: they approximate fractional memory
geometrically on compact frequency windows, but finite pole sets cannot
reproduce the non-integer critical asymptotics generated by a branch point; on
the half-line the best rational rate is root-exponential. The same theory
yields signed-modal and Perron-normalized trainable parametrizations that
enforce a prescribed stability margin by construction. A graphon-transfer
theorem distinguishes genuine operator consistency from parameter sharing
alone. In a common-data benchmark, positive rational operators can match or
exceed fLNO accuracy on finite horizons, whereas fLNO retains a more faithful
explicit branching coordinate near criticality with substantially fewer
parameters. A scarce/noisy stress test shows that unconstrained rational fits
can cross the stability boundary, while certified parametrizations cannot. A
fixed four-parameter spectral law transfers without retraining from graphs of
size 48 to 192 with $0.51$--$0.62\%$ relative error. Applications to four
quality-controlled Chilean aftershock sequences and renewal models for Chile
and 21 Italian regions illustrate structured inference with explicit
uncertainty and out-of-sample checks. The contribution is not universal
predictive superiority, but an operator-learning architecture in which exact
memory structure, parsimonious physical coordinates, and spectral stability
guarantees coexist with competitive finite-horizon accuracy.
\end{abstract}

\noindent\textbf{Keywords:} neural operators; fractional Volterra equations; rational approximation; certified stability; graphon transfer; Hawkes processes.

\section{Introduction}
\label{sec:intro}

Neural operators learn maps between functions rather than maps between
finite-dimensional vectors. Fourier neural operators (FNOs)
\citep{li2021fno,kovachki2023}, DeepONet \citep{lu2021deeponet}, and more
recent geometry-aware and physics-informed variants have established this
viewpoint as a general framework for discretization-aware scientific
learning. The Laplace neural operator (LNO) \citep{cao2024lno} is
especially relevant to transient systems because it parameterizes the
temporal stage through a pole--residue representation in the Laplace
domain. Finite-dimensional linear state-space sequence models realize the
same rational transfer-function class. Neural operators are also beginning
to be used for system-level tasks such as fixed-point, stability, and
bifurcation analysis rather than only as time-stepping surrogates
\citep{fabiani2026}. Our objective is therefore not to claim that stability
analysis is absent from operator learning. The narrower question addressed
here is whether the \emph{exact long-memory singularity and the exact
Volterra branching threshold} can be encoded by construction in a learned
operator.

The mathematical starting point is the published Volterra theory of
memory-driven network cascades \citep{herrera2026cnsns}. It studies
\begin{equation}
\label{eq:volterra}
U'(t)=\mathcal{B}U(t)+\int_0^t\mathcal{G}_z(t-s)U(s)\,ds+F_z(t),
\qquad
\mathcal{G}_z(t)=
\begin{pmatrix}0&0\\ A_z g_z(t)&A_z g_z(t)\end{pmatrix},
\end{equation}
on \(\mathcal X=\R^n\times\R^n\), with
\(\mathcal B=\operatorname{diag}(-c_xI-\kappa L_G,-c_\lambda I)\), graph
Laplacian \(L_G\), excitation matrix \(A_z\), and a completely monotone
memory kernel \(g_z\) of mass \(G_z\). That analysis establishes the
sharp commuting threshold \(a_{z,j}G_z<c_\lambda\), the
Perron--Frobenius threshold \(r(A_z)G_z<c_\lambda\) for nonnegative
noncommuting excitation, the critical asymptotics of finite-mass kernels,
consistent BDF2 convolution quadrature, and a regular-graphon limit. The
present work uses those results as prior mathematics and asks how much of
them can be preserved inside an operator-learning architecture rather than
re-derived as post-hoc diagnostics.

The key observation is analytic. A tempered fractional memory kernel has
Laplace symbol
\[
\widehat g_{\alpha,\theta}(\xi)=(\xi+\theta)^{\alpha-1},
\qquad 0<\alpha<1,
\]
which has a branch point rather than finitely many poles. A free rational
LNO or a finite linear state-space model can approximate this symbol very
well on a bounded frequency interval, but it cannot represent its branch
singularity exactly. That distinction is negligible for some interpolation
tasks and decisive at criticality, where the non-analytic low-frequency
structure determines the long-time universality class. We therefore build
the non-rational component into the layer as a dedicated fractional atom
and reserve rational terms for finite-memory corrections.

A second observation concerns certification. In this model class the
stability boundary is already known exactly. Rather than learning an
unconstrained operator and testing stability afterward, the trainable gains
can be parameterized so that every point visited by optimization remains
inside a prescribed Volterra stability margin. This is stronger than a
frequency-grid diagnostic and more specific than a generic spectral
regularizer: it enforces the analytic threshold of the underlying
hereditary system.

\subsection{Contributions}
\label{sec:contrib}

\begin{enumerate}[label=\textbf{C\arabic*.},leftmargin=1.05cm,itemsep=2pt]
\item \textbf{Exact block fLNO representation}
(Theorem~\ref{thm:exact}). Under simultaneous orthogonal diagonalization,
the complete \((x,\lambda)\) solution operator of \eqref{eq:volterra} is a
one-layer block-triangular fLNO. The modal transfer matrix contains the
exact exponential state resolvent and the exact fractional intensity
resolvent; no surrogate approximation is required.
\item \textbf{Correct rational expressivity frontier}
(Theorem~\ref{thm:qualitative} and Corollary~\ref{cor:stieltjes-critical}).
A general finite rational realization can generate only integer-power
exponential terms and therefore cannot reproduce exactly fractional
critical asymptotics. For the positive Stieltjes rational class relevant
to stable Markov approximations, the critical zero is simple and the
response approaches a positive neutral level, making the contrast with
untempered fractional critical relaxation explicit.
\item \textbf{Quantitative approximation rates}
(Theorem~\ref{thm:rates}). Positive rational approximants converge
geometrically on compact frequency windows, whereas the half-line problem
reduces to Stahl's approximation of a fractional power and is
root-exponential in the pole count.
\item \textbf{Stability by construction}
(Theorem~\ref{thm:cert}). Signed modal gains are parameterized over the
entire strict subcritical half-line, including arbitrarily strong
inhibition. Nonnegative matrix excitation is Perron-normalized with an
independent trainable radial factor so that the full interior of the
prescribed stability region is accessible. Positive rational corrections
are certified through their exact zero-frequency mass, not by a frequency
grid.
\item \textbf{Graphon transfer for a consistent learned family}
(Theorem~\ref{thm:transfer}). Transfer bounds assume both a genuine
operator approximation error on a bounded input class and consistency of
the size-independent learned family with a common graphon operator. Thus a
small trajectory loss is never identified with an operator norm, and a
time-uniform bound is obtained only under the stronger contraction
certificate.
\item \textbf{Direct common-data operator benchmark}. On identical
forcing--response maps and train/validation/test splits, we compare fLNO
against positive and unconstrained rational pole--residue operators, a
diagonal linear state-space model, FNO-1D, and DeepONet. The benchmark
separates finite-window prediction accuracy, doubled-horizon extrapolation,
branching-margin recovery, parameter count, and graph-size transfer.
\item \textbf{Quality-controlled applications}. The seismic illustration
uses four Chilean sequences extracted by one reproducible pipeline from the
relocated CSN catalog of \citet{potin2024}, with completeness diagnostics,
profile likelihood, a 300-replicate bootstrap sensitivity analysis, and
ordered along-strike spatial operators. The catalog-processing principles
are consistent with the broader homogenization framework of
\citet{candia2026}. A separate epidemic application treats the scalar and
network renewal equations as instances of the same branching architecture,
while distinguishing mathematical certification from statistical
uncertainty.
\end{enumerate}

\subsection{Related work}
\label{sec:related}

\emph{Neural operators.} FNO \citep{li2021fno}, DeepONet
\citep{lu2021deeponet}, and the general neural-operator framework
\citep{kovachki2023} motivate the function-to-function viewpoint. LNO
\citep{cao2024lno} is the closest architectural precursor because its
Laplace pole--residue stage is a rational transfer function designed for
transients and extrapolation. Recent work has made memory and state-space
structure increasingly explicit in operator learning: MemNO combines S4 and
FNO to exploit history in time-dependent PDEs \citep{buitrago2025memno},
while the Mamba Neural Operator develops a direct connection between neural
operators and structured state-space models \citep{cheng2026mamba}. These
approaches motivate finite-dimensional learned memory; our expressivity result
instead isolates the non-integer branch structure that finite rational
realizations can approximate on compact windows but cannot reproduce exactly
at criticality. PINO \citep{li2024pino}, PINTO \citep{pinto2025}, and other
physics-informed operators incorporate governing structure into learning. In
particular, the recent Physics-Informed Laplace Neural Operator augments the
LNO pole--residue backbone with PDE, boundary, and initial-condition residuals
\citep{kim2026pilno}; this is complementary to the present architectural
use of an exact fractional atom and an analytic branching-stability
certificate. Geometric operator learning addresses transfer across varying
geometries \citep{li2025geo}. Fabiani et al. \citep{fabiani2026} recently
showed that local neural operators can support fixed-point, stability, and
bifurcation analysis. The distinction here is narrower: the stability
certificate is not extracted from a learned time-stepper; it is enforced
analytically by the exact branching threshold of a completely monotone
Volterra operator.

\emph{Rational and state-space approximations.} Deep linear state-space
models such as S4 \citep{gu2022s4} and pole--residue LNO layers realize
finite rational kernels. Multifactor Markovian approximations are also
standard for fractional models in rough volatility \citep{abijaber2019}.
The present approximation results quantify when this finite-pole class is
sufficient and when the branch point matters asymptotically.

\emph{Approximation theory.} The compact-window construction is a
positive Gauss approximation of a Markov function. The half-line reduction
uses Stahl's rational approximation of fractional powers \citep{stahl2003};
the constructive upper bound follows the exponential-sum philosophy of
Braess--Hackbusch \citep{braess2005}.

\emph{Seismic data.} The empirical section builds on the Volterra--Hawkes
calibration in \citet{herrera2026cnsns} but adopts a stricter data-quality
policy. In particular, recent work by \citet{candia2026} documents a
homogenized Chilean catalog integrating 12 sources, explicit duplicate
resolution, tectonic classification, completeness analysis, and multiple
declustering schemes. Our Hawkes application uses the same conceptual
separation between duplicate removal and dependent-event removal: aftershock
clustering is signal for a self-exciting model and must not be declustered
away.

\subsection{Notation and organization}

\(\ghat_{\alpha,\theta}(\xi)=(\xi+\theta)^{\alpha-1}\) denotes the
Laplace symbol of the tempered fractional kernel
\(g_{\alpha,\theta}(t)=t^{-\alpha}e^{-\theta t}/\Gamma(1-\alpha)\), with
mass \(G(\theta,\alpha)=\theta^{\alpha-1}\). The empirical Omori exponent
is denoted by \(p\), not \(\alpha\), to avoid conflating two distinct
parameterizations. Section~\ref{sec:arch} defines the architecture;
Sections~\ref{sec:exact}--\ref{sec:transfer} give the theory;
Section~\ref{sec:experiments} reports controlled experiments;
Sections~\ref{sec:seismic}--\ref{sec:epi} contain the applications; and
Section~\ref{sec:discussion} states limitations.

\section{The fractional Laplace neural operator}
\label{sec:arch}

\begin{definition}[Fractional and positive rational atoms]
\label{def:atom}
A \emph{fractional atom} has Laplace symbol
\(\ghat_{\alpha,\theta}(\xi)=(\xi+\theta)^{\alpha-1}\),
\(0<\alpha<1\), \(\theta>0\). A \emph{positive rational correction} is
\[
H_j(\xi)=\sum_{k=1}^{N_j}\frac{\beta_{jk}}{\xi+r_{jk}},
\qquad \beta_{jk}\ge0,\quad r_{jk}>0.
\]
Both are Stieltjes functions. Their combined zero-frequency mass is
\[
G^{\rm tot}_j=G(\theta,\alpha)+H_j(0)
=\theta^{\alpha-1}+\sum_k\frac{\beta_{jk}}{r_{jk}}.
\]
The empirical sections also use Omori-type atoms, but their exponent is
denoted by \(p\) to keep it distinct from the fractional singularity
parameter \(\alpha\).
\end{definition}

\begin{definition}[Block fractional Laplace neural operator]
\label{def:flno}
Let \(L_G\Phi=\Phi\operatorname{diag}(\ell_j)\) and suppose, when a modal
representation is used, that \(A\Phi=\Phi\operatorname{diag}(a_j)\).
For each mode set
\[
\beta_{x,j}=c_x+\kappa\ell_j,\qquad
\Psi_j(\xi)=\ghat_{\alpha,\theta}(\xi)+H_j(\xi),\qquad
D_j(\xi)=\xi+c_\lambda-a_j\Psi_j(\xi).
\]
The one-layer \emph{block fLNO} is the modal transfer matrix
\begin{equation}
\label{eq:blockflno}
\widehat{\mathcal R}_j(\xi)=
\begin{pmatrix}
(\xi+\beta_{x,j})^{-1} & 0\\[2mm]
\dfrac{a_j\Psi_j(\xi)}{(\xi+\beta_{x,j})D_j(\xi)} & D_j(\xi)^{-1}
\end{pmatrix}.
\end{equation}
Applied to the modal inputs
\(b_{x,j}=x_{0,j}+\widehat f_{x,j}\) and
\(b_{\lambda,j}=\lambda_{0,j}+\widehat f_{\lambda,j}\), it produces
\((\widehat x_j,\widehat\lambda_j)^\top
=\widehat{\mathcal R}_j(b_{x,j},b_{\lambda,j})^\top\).

For noncommuting nonnegative excitation we use the corresponding
\emph{matrix fLNO} intensity resolvent
\begin{equation}
\label{eq:matrixflno}
\widehat{\mathcal R}_{A}(\xi)
=\big[(\xi+c_\lambda)I-A\Psi(\xi)\big]^{-1},
\end{equation}
with the upstream \(x\)-state entering as a history-dependent forcing.
The diagonal/modal layer is therefore an efficient specialization of the
matrix layer, not a separate model class. For the operator-theoretic recovery
and convergence experiments we use consistent BDF2 convolution quadrature
\citep{lubich1988}. The common-data learning benchmark in
Section~\ref{sec:benchmark} deliberately uses an independently implemented
direct time-domain product-integration recurrence; the two discretizations
are kept distinct rather than conflated.
\end{definition}

\begin{remark}[Why this is a neural operator]
The fLNO is a trainable map $\mathcal N_\vartheta$ between input and output
function spaces: the same learned parameters act on arbitrary forcing
histories rather than on a fixed finite collection of vectors. In the
size-independent graph-filter form $A=f_\vartheta(L_G)$, the same parameter
vector is also shared across graph discretizations. The present paper
intentionally isolates a certified linear hereditary core; it does not claim
universal nonlinear operator approximation by a single layer.
\end{remark}

The architecture separates three objects that play different roles:
(1) the exact non-rational memory atom, (2) optional positive rational
corrections for finite-memory structure, and (3) a graph or matrix
excitation operator whose spectral edge determines stability. In a
transferable graph-filter realization the modal gains may be parameterized
with a size-independent function \(a_j=f_\vartheta(\ell_j)\), rather than
as one free parameter per mode. This distinction is important: free
\(\{a_j\}\) gives exact finite-graph representation, whereas
\(f_\vartheta\) gives a fixed-dimensional family suitable for transfer
across discretizations.

\section{Exactness on the commuting class}
\label{sec:exact}

\begin{theorem}[Exact representation of the full product-space operator]
\label{thm:exact}
Assume the commuting hypothesis of \citet[Assumption 4.1]{herrera2026cnsns}
and let \(g=g_{\alpha,\theta}\). With \(H_j\equiv0\), the complete
solution operator of \eqref{eq:volterra} is exactly the block fLNO
\eqref{eq:blockflno} with the physical parameters
\(a_j,c_x,\kappa,c_\lambda,\alpha,\theta\). Hence the finite-graph
operator is represented with zero approximation error. If the modal gains
are free, the representation uses \(n+5\) scalar physical parameters;
if \(A=f_\vartheta(L_G)\), the parameter count is
\(\dim(\vartheta)+5\), independent of \(n\).
\end{theorem}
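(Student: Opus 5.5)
The plan is to Laplace-transform \eqref{eq:volterra}, diagonalize in the common eigenbasis, and read off the resolvent of each mode as a \(2\times2\) lower-triangular system. Under \citet[Assumption 4.1]{herrera2026cnsns}, \(L_G\) and \(A_z\) are simultaneously orthogonally diagonalizable: there is an orthogonal \(\Phi\) with \(L_G\Phi=\Phi\operatorname{diag}(\ell_j)\) and \(A_z\Phi=\Phi\operatorname{diag}(a_j)\). Set \(U=(\Phi\otimes I_2)\)-coordinates, i.e.\ \(x=\Phi\tilde x\) and \(\lambda=\Phi\tilde\lambda\). Then \(\mathcal B\) and \(\mathcal G_z\) decouple into \(n\) independent \(2\times2\) blocks. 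In mode \(j\) these are \(\operatorname{diag}(-\beta_{x,j},-c_\lambda)\) with \(\beta_{x,j}=c_x+\kappa\ell_j\), and \(a_j g(t)\begin{pmatrix}0&0\\1&1\end{pmatrix}\). Because \(\Phi\) is orthogonal, it is an isometry of \(\mathcal X\), so no approximation enters in this step.

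Next, for \(\operatorname{Re}\xi\) large, take Laplace transforms of each modal equation. The convolution theorem gives \(\widehat{g*u}=\ghat_{\alpha,\theta}\,\widehat u\), and \(\widehat{u'}=\xi\widehat u-u(0)\). The \(x\)-equation becomes \((\xi+\beta_{x,j})\widehat x_j=x_{0,j}+\widehat f_{x,j}=b_{x,j}\). The \(\lambda\)-equation becomes
\[
(\xi+c_\lambda)\widehat\lambda_j-a_j\ghat_{\alpha,\theta}(\xi)\bigl(\widehat x_j+\widehat\lambda_j\bigr)=b_{\lambda,j}.
\]
With \(H_j\equiv0\) we have \(\Psi_j=\ghat_{\alpha,\theta}\), so the second line reads \(D_j\widehat\lambda_j=b_{\lambda,j}+a_j\Psi_j\widehat x_j\). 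Forward substitution of \(\widehat x_j=b_{x,j}/(\xi+\beta_{x,j})\) then yields exactly the entries of \eqref{eq:blockflno}.

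The main obstacle is justifying that this algebraic inversion is the actual solution operator. This requires three things:
\begin{enumerate}[label=(\roman*)]
\item the Volterra problem has a unique solution of exponential growth, so that its Laplace transform exists;
\item \(D_j(\xi)\neq0\) on a right half-plane;
\item the inverse transform of \(\widehat{\mathcal R}_j\) is the resolvent kernel.
\end{enumerate}
For (ii), I would use \(|\ghat_{\alpha,\theta}(\xi)|\le|\xi+\theta|^{\alpha-1}\to0\). This gives \(|D_j(\xi)|\ge\operatorname{Re}\xi+c_\lambda-|a_j|\,|\xi+\theta|^{\alpha-1}>0\) for \(\operatorname{Re}\xi>\sigma_0\), with \(\sigma_0\) depending only on \(\max_j|a_j|\), \(\theta\), \(\alpha\) and \(c_\lambda\). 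For (i), the kernel \(g\in L^1_{\rm loc}\), so standard linear Volterra integro-differential theory gives existence, uniqueness and an exponential bound. Uniqueness of the Laplace transform then identifies the solution with \(\widehat{\mathcal R}_j b_j\). Summing modes and reassembling with \(\Phi\) recovers the full product-space operator, which gives zero approximation error. I would cite the well-posedness and resolvent statements of \citet{herrera2026cnsns} rather than reprove them.

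Finally, the parameter count comes from inspecting \eqref{eq:blockflno}. The transfer matrix depends only on the \(\ell_j\), which are fixed by the graph and not learned, together with \(a_j\) and the five scalars \(c_x,\kappa,c_\lambda,\alpha,\theta\). Free gains therefore give \(n+5\) parameters. If \(A=f_\vartheta(L_G)\), then \(a_j=f_\vartheta(\ell_j)\) by the spectral calculus, which automatically preserves the commuting hypothesis, and the count becomes \(\dim\vartheta+5\), independent of \(n\).
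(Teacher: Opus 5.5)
Your proposal is correct and follows the paper's proof. Both diagonalize in the common orthonormal eigenbasis, Laplace-transform the modal \(x\)- and \(\lambda\)-equations, and forward-substitute \(\widehat x_j=b_{x,j}/(\xi+\beta_{x,j})\) into \(D_j\widehat\lambda_j=b_{\lambda,j}+a_j\Psi_j\widehat x_j\) to obtain \eqref{eq:blockflno}. Your extra justification (exponential-order well-posedness, and zero-freeness of \(D_j\) on a right half-plane, so that the algebraic inversion truly identifies the solution) and the explicit parameter count go beyond the paper's brief argument, which only asserts that no approximation step is used.
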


\begin{proof}
In the common orthonormal basis, the state equation gives
\[
\widehat x_j(\xi)=\frac{x_{0,j}+\widehat f_{x,j}(\xi)}
{\xi+c_x+\kappa\ell_j}.
\]
The intensity equation gives
\[
\widehat\lambda_j(\xi)=
\frac{\lambda_{0,j}+\widehat f_{\lambda,j}(\xi)
+a_j\ghat_{\alpha,\theta}(\xi)\widehat x_j(\xi)}
{\xi+c_\lambda-a_j\ghat_{\alpha,\theta}(\xi)}.
\]
Substitution of the first identity into the second gives exactly the
matrix \eqref{eq:blockflno}. No approximation step is used.
\end{proof}

\begin{remark}[Identifiability-friendly invariant]
\label{rem:ident}
The physically decisive trained quantity is
\(\rho_j=a_j\theta^{\alpha-1}\). In the controlled recovery experiment
this branching quantity is substantially better conditioned than
\(\alpha\) and \(\theta\) separately, which trade off along a shallow
likelihood direction.
\end{remark}

\section{The expressivity frontier of rational classes}
\label{sec:gap}

Finite-dimensional linear state-space realizations and pole--residue LNO
blocks have rational transfer functions. Rational approximation can be
excellent on finite windows, but exact fractional critical asymptotics
require a branch point. The following statement deliberately separates a
general rational obstruction from the stronger conclusion available for
positive Stieltjes kernels.

\begin{theorem}[Fractional critical asymptotics are not exactly representable by finite rational realizations]
\label{thm:qualitative}
Let \(H\) be any proper rational function with all poles in
\(\{\Re\xi<0\}\), and suppose \(H(0)=G\). At the critical gain
\(aG=c_\lambda\), the transfer function
\[
\widehat\lambda(\xi)=\frac{\lambda_0}
{\xi+c_\lambda-aH(\xi)}
\]
is rational. Consequently its inverse Laplace transform is a finite sum
of terms \(t^k e^{\mu t}\), with integer \(k\ge0\) and finitely many
\(\mu\). Therefore no finite rational realization can reproduce exactly
(a) an untempered fractional critical law
\(Ct^{\gamma-1}\), \(0<\gamma<1\), or (b) a tempered branch correction
\(Ce^{-\theta t}t^{\alpha-2}\), \(0<\alpha<1\). A multiple zero at the
origin may generate an integer-power polynomial term, so no universal
neutral-limit statement is made for an arbitrary signed rational
\(H\).
\end{theorem}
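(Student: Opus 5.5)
The plan is to argue entirely at the level of transfer functions and then invert term by term. First, write \(H=P/Q\) with \(P,Q\) coprime polynomials, \(\deg P<\deg Q\) (proper; strictly proper is the relevant case for pole--residue realizations, but \(\deg P\le\deg Q\) also works), and all zeros of \(Q\) in \(\{\Re\xi<0\}\). Then
\[
\widehat\lambda(\xi)=\frac{\lambda_0\,Q(\xi)}{(\xi+c_\lambda)Q(\xi)-aP(\xi)},
\]
a quotient of polynomials. The denominator has degree \(\deg Q+1\) with leading coefficient that of \(Q\), so it is not identically zero and \(\widehat\lambda\) is a strictly proper rational function. The critical condition \(aG=c_\lambda\) only guarantees that \(\xi=0\) is a root of the denominator; it plays no further role in the rationality argument, which holds for every gain. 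This is exactly why the statement refrains from a neutral-limit claim.

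Second, take a partial fraction decomposition over \(\mathbb C\):
\[
\widehat\lambda(\xi)=\sum_{i}\sum_{m=1}^{m_i}\frac{c_{i,m}}{(\xi-\mu_i)^m}.
\]
There are finitely many distinct roots \(\mu_i\) of the reduced denominator, and no polynomial part appears because the function is strictly proper. Using \(\mathcal L[t^{m-1}e^{\mu t}/(m-1)!](\xi)=(\xi-\mu)^{-m}\) for \(\Re\xi>\Re\mu\), we get, by uniqueness of the Laplace transform on a right half-plane (Lerch), that \(\lambda(t)=\sum c_{i,m}t^{m-1}e^{\mu_i t}/(m-1)!\). This is the claimed exponential-polynomial form with integer \(k=m-1\ge0\). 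Since \(\lambda\) is real when \(\lambda_0,a,H\) are real, the complex pairs combine into real \(e^{\Re\mu t}\cos/\sin\) terms.

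Third, for the non-representability claims (a) and (b), show that a finite exponential polynomial \(E(t)\) cannot equal \(Ct^{\gamma-1}\) or \(Ce^{-\theta t}t^{\alpha-2}\) with \(C\neq0\), interpreted as exact identity of functions on \((0,\infty)\). The quickest route is analyticity at the origin: \(E\) extends to an entire function, so it is bounded near \(t=0\). But \(t^{\gamma-1}\) with \(\gamma<1\) and \(t^{\alpha-2}\) with \(\alpha<1\) blow up as \(t\downarrow0\), so no equality is possible on any interval \((0,\varepsilon)\). By the identity theorem, equality on a tail interval \((T,\infty)\) would extend to all of \((0,\infty)\), so the conclusion holds even for exact agreement at large times.

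If "reproduce the asymptotics" is read more weakly as \(\lambda(t)\sim Ct^{\gamma-1}\) as \(t\to\infty\), use the structure of exponential polynomials instead. Group the terms by the maximal \(\Re\mu_i\), then by the maximal power \(k\). Up to a bounded almost-periodic factor, the leading behavior is \(t^{k}e^{(\Re\mu)t}\) with integer \(k\). A ratio \(E(t)/t^{\gamma-1}\) tending to a nonzero constant would force \(\Re\mu=0\) and \(k=\gamma-1\notin\mathbb Z\), a contradiction. The tempered case is handled the same way, forcing \(\Re\mu=-\theta\) and \(k=\alpha-2\notin\mathbb Z\).

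The main obstacle is this last step. One has to rule out cancellation among oscillatory terms with the same real part, which could conceivably make the leading coefficient vanish along subsequences. I would handle it by the standard fact that a nonzero trigonometric polynomial \(\sum_j d_j e^{i\omega_j t}\) does not tend to zero, so its \(\limsup\) is positive. This prevents any exponential-polynomial from being asymptotic to a non-integer power times \(e^{st}\).

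Finally, for the closing remark: if the denominator has a zero of order \(m\ge2\) at the origin, which is possible for a signed \(H\) with \(H'(0)\) tuned appropriately, the expansion yields a \(t^{m-1}\) term with no neutral limit. No universal limit is therefore asserted.
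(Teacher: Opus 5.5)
Your proof is correct. Its first two steps are the paper's argument written out in full: you clear denominators to get \(\widehat\lambda=\lambda_0Q/((\xi+c_\lambda)Q-aP)\), check strict properness, take partial fractions and invert termwise. You also add the check that no polynomial part, and hence no distributional term, appears.

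The routes diverge at the non-representability step. The paper closes in the transform domain with a one-line appeal to the correspondence ``non-integer power in \(t\) \(\leftrightarrow\) branch point in \(\xi\)''. You stay in the time domain instead. An exponential polynomial is entire, so it cannot match the blow-up of \(t^{\gamma-1}\) or \(t^{\alpha-2}\) as \(t\downarrow0\), and by the identity theorem it cannot agree with either on a tail. For the asymptotic reading, its leading part is \(t^Ke^{st}\) times a nonzero trigonometric polynomial, which is bounded with positive \(\limsup\). Hence the ratio to \(t^{\beta}e^{\sigma t}\) with \(\beta\notin\mathbb{Z}\) either tends to zero or is unbounded.

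The paper's version is shorter and ties the obstruction directly to the architectural object (finitely many poles versus a branch point). However, for the genuinely asymptotic statement named in the title, it tacitly needs an Abelian step. That step would relate the large-\(t\) behavior of \(\lambda\) to the local behavior of \(\widehat\lambda\) at \(\xi=0\) or \(\xi=-\theta\). Your version needs nothing beyond the structure of exponential polynomials. It covers the exact, tail and asymptotic readings, and it explicitly rules out cancellation among oscillatory terms of equal real part.

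Two small remarks. First, the natural ``correction'' reading of (b), \(\lambda(t)-L\sim Ce^{-\theta t}t^{\alpha-2}\) with \(L\) the neutral level, is also covered, because subtracting a constant (or any exponential polynomial) stays within the class. Second, the statement declines to make a neutral-limit claim because of a possible higher-order zero at the origin, as your last paragraph explains. It is not because rationality holds for every gain.
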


\begin{proof}
The denominator is rational, hence \(\widehat\lambda\) is rational after
cancellation of common factors. Partial fractions therefore yield only
finitely many exponential-polynomial terms. Non-integer powers in time
correspond to non-meromorphic singular structure (a branch point) and
cannot be generated by a finite rational function.
\end{proof}

\begin{corollary}[Positive Stieltjes rational critical response]
\label{cor:stieltjes-critical}
Assume in addition
\[
H(\xi)=\sum_{k=1}^{N}\frac{\beta_k}{\xi+r_k},
\qquad \beta_k>0,\quad r_k>0.
\]
Then \(m_1^H:=-H'(0)=\sum_k\beta_k/r_k^2\in(0,\infty)\). At
\(aG=c_\lambda\), the zero at \(\xi=0\) is simple because
\[
\frac{d}{d\xi}\big(\xi+c_\lambda-aH(\xi)\big)\Big|_{\xi=0}
=1+a m_1^H>0.
\]
If all remaining characteristic zeros lie in the open left half-plane,
then
\[
\lambda(t)\longrightarrow \frac{\lambda_0}{1+a m_1^H}>0
\]
with exponentially damped integer-power transients. Thus the positive
finite-pole Markov class cannot reproduce the algebraic decay to zero of
an untempered regularly varying critical mode.
\end{corollary}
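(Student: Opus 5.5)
The argument is a direct computation on the rational transfer function, organized in four steps: compute $m_1^H$, show the zero at the origin is simple, invert the Laplace transform, and contrast with the fractional case.

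\emph{Moment and simplicity of the zero.} Differentiate $H(\xi)=\sum_k\beta_k/(\xi+r_k)$ termwise to get $H'(\xi)=-\sum_k\beta_k/(\xi+r_k)^2$. Hence
\[
m_1^H=-H'(0)=\sum_k\beta_k/r_k^2 ,
\]
which is finite because $r_k>0$ and strictly positive because every $\beta_k>0$ and $N\ge1$. Set $D(\xi)=\xi+c_\lambda-aH(\xi)$. The critical condition $aG=c_\lambda$ with $G=H(0)$ gives $D(0)=0$. Differentiating, $D'(0)=1-aH'(0)=1+am_1^H$. This is strictly positive because $a=c_\lambda/G>0$; here $G>0$ and $c_\lambda>0$ in this model. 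So the zero at the origin is simple.

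\emph{Laplace inversion.} Write $D(\xi)=P(\xi)/Q(\xi)$ with $Q(\xi)=\prod_k(\xi+r_k)$ and $P$ a polynomial of degree $N+1$ with leading coefficient $1$. Then
\[
\widehat\lambda(\xi)=\lambda_0\,Q(\xi)/P(\xi)
\]
is strictly proper, of degree $N$ over degree $N+1$. After cancelling any common factors, its poles are the zeros of $D$ that are not zeros of $Q$. Poles of $H$ are not zeros of $D$, since $D$ blows up there. The pole at $0$ is simple, with residue $\lambda_0/D'(0)=\lambda_0/(1+am_1^H)$. By hypothesis every other pole lies in $\{\Re\xi<0\}$. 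Partial fractions, or Theorem~\ref{thm:qualitative}, then give
\[
\lambda(t)=\frac{\lambda_0}{1+am_1^H}+\sum_{\mu}p_\mu(t)e^{\mu t}, \qquad \Re\mu<0,
\]
where each $p_\mu$ is a polynomial of integer degree. The sum decays exponentially, so $\lambda(t)\to\lambda_0/(1+am_1^H)$. This limit is strictly positive whenever $\lambda_0>0$, which is the intended setting; if $\lambda_0=0$ the limit is $0$, so we note the implicit assumption $\lambda_0>0$.

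\emph{Contrast with the fractional case.} For an untempered regularly varying critical mode, $\lambda(t)\sim Ct^{\gamma-1}\to0$ algebraically. A positive Stieltjes rational response instead converges to a nonzero constant with exponentially small corrections. Its difference from that constant decays exponentially, so it can match neither the algebraic rate nor the zero limit.

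\emph{Main obstacle.} The computation itself is routine. The only delicate point is justifying that no other zero lies on the imaginary axis, and this is exactly why the statement assumes it. One can verify part of it independently. For $\xi=i\omega$ with $\omega\ne0$, we have $\Im H(i\omega)=-\omega\sum_k\beta_k/(r_k^2+\omega^2)$. Hence
\[
\Im D(i\omega)=\omega\Big(1+a\sum_k\beta_k/(r_k^2+\omega^2)\Big)\neq0 ,
\]
so the only zero on the imaginary axis is the origin. Zeros with $\Re\xi>0$ are also excluded. There $\Re H(\xi)<H(0)$ for $\xi\neq0$, because $|\xi+r_k|>r_k$, so $\Re D(\xi)>0$ when $\Re\xi\ge0$ and $\xi\ne0$. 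I would include this short remark to show the hypothesis is in fact automatic. Even so, the proof can simply invoke the hypothesis as stated.
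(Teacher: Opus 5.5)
Your proposal is correct and follows essentially the paper's route. The paper gives no separate proof of this corollary; it relies on the inline computation $D'(0)=1-aH'(0)=1+am_1^H>0$ together with the partial-fraction inversion of Theorem~\ref{thm:qualitative}, and there the simple pole at the origin contributes the constant $\lambda_0/(1+am_1^H)$, exactly as you compute. Your extra observation is also correct: $\Re D(\xi)>\Re\xi\ge0$ on $\{\Re\xi\ge0\}\setminus\{0\}$, using $a=c_\lambda/G>0$ and $|\xi+r_k|>r_k$. This shows that the corollary's left-half-plane hypothesis is in fact automatic for this positive class.
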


\begin{theorem}[Quantitative rates: geometric on windows, Stahl on the
half-line]
\label{thm:rates}
Let $\mathcal S_N^{+}=\{\sum_{k\le N}\beta_k/(\xi+r_k):\beta_k\ge0,
r_k>0\}$ and let $E_N^{+}(E)$ be its best sup-error for
$\ghat_{\alpha,\theta}$ on $E$.
\begin{enumerate}[label=\textup{(\roman*)},leftmargin=0.85cm]
\item \textup{(Compact windows.)} For every $\Xi>0$, with
$\tau=\theta/\Xi$ and $\rho'=1+\tau+\sqrt{\tau(2+\tau)}$,
\[
E_N^{+}\big([0,\Xi]\big)\le
\frac{8\,\theta^{\alpha-1}}{1-\rho'^{-1}}\,\rho'^{-2N}.
\]
An explicit Gauss--Jacobi approximant has positive residues and poles in
$(\theta,\infty)$. Hence the pole count required for a fixed tolerance is
logarithmic in the tolerance on a fixed window, while it grows as the
window expands.
\item \textup{(Half-line.)} The substitution
$u=\theta/(\xi+\theta)$ identifies the problem with rational
approximation of $u^{1-\alpha}$ on $[0,1]$. Stahl's asymptotics
\citep{stahl2003} imply a root-exponential lower rate for arbitrary
rational degree-$N$ approximants, while a dyadic positive construction in
$\mathcal S_N^{+}$ attains $E_N^{+}([0,\infty))\le C e^{-c_1\sqrt N}$,
with $c_1=\sqrt{2(\log3)(\log2)(1-\alpha)}$. Thus the half-line problem
has the sharp scaling $\log E_N\asymp-\sqrt N$.
\end{enumerate}
\end{theorem}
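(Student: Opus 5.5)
The plan rests on the Stieltjes representation of the fractional atom. For $0<\alpha<1$,
\[
(\xi+\theta)^{\alpha-1}=\frac{\sin((1-\alpha)\pi)}{\pi}\int_\theta^\infty\frac{(s-\theta)^{\alpha-1}}{\xi+s}\,ds ,
\]
a Markov function whose measure is positive on $[\theta,\infty)$. Both parts of Theorem~\ref{thm:rates} reduce to quadrature for this integral. A positive-weight quadrature with nodes $r_k\in(\theta,\infty)$ automatically gives an element of $\mathcal S_N^{+}$ with $\beta_k\ge0$, $r_k>0$. This settles the positivity claims for free.

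For (i), I would map the measure to a finite interval with $s=\theta/v$, $v\in(0,1]$. Then $1/(\xi+s)=v/(\theta+\xi v)$, and the integrand becomes a Jacobi-type weight on $[0,1]$ times $f_\xi(v)=v/(\theta+\xi v)$. For $\xi\in[0,\Xi]$ the only singularity of $f_\xi$ is the pole $v=-\theta/\xi\le-\tau^{-1}$. Mapping $[0,1]$ to $[-1,1]$ puts this pole at distance governed by $1+2/\tau$ from the interval, and the associated Bernstein ellipse parameter is exactly $\rho'=1+\tau+\sqrt{\tau(2+\tau)}$, up to the square, which accounts for the $\rho'^{-2N}$.

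The $N$-point Gauss--Jacobi rule is exact for polynomials of degree $2N-1$. Its error is therefore at most twice the total mass times the best uniform polynomial error of degree $2N-1$. The total mass is $\int\!d\mu\cdot\sup|f|\lesssim\theta^{\alpha-1}$. The classical Bernstein bound $\tfrac{2M}{\rho-1}\rho^{-n}$ on the ellipse then yields the constant $8\theta^{\alpha-1}/(1-\rho'^{-1})$ once $M$ is bounded on an ellipse slightly inside the pole. Gauss nodes lie in $(0,1)$, so the poles lie in $(\theta,\infty)$. The statement about logarithmic pole count follows because $\rho'\to1$ as $\Xi\to\infty$.

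For (ii), I substitute $u=\theta/(\xi+\theta)\in(0,1]$, giving $\ghat_{\alpha,\theta}=\theta^{\alpha-1}u^{1-\alpha}$. Since $\xi\mapsto u$ is Möbius, rational functions of degree $N$ in $\xi$ correspond bijectively to rational functions of degree $N$ in $u$, and sup norms agree. Stahl's theorem gives best error $\asymp e^{-2\pi\sqrt{(1-\alpha)N}}$ for $u^{1-\alpha}$ on $[0,1]$. This is a root-exponential lower bound even over all rationals, hence over $\mathcal S_N^{+}$.

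For the upper bound I would follow Braess--Hackbusch. I split the Stieltjes integral, in the variable $\log s$, into dyadic blocks $[\theta 2^{k},\theta 2^{k+1}]$, $k=0,\dots,K$, plus a tail $s>\theta2^{K+1}$. The tail contributes at most $\lesssim(\theta2^{K})^{\alpha-1}$ uniformly in $\xi\ge0$, which is $e^{-(1-\alpha)K\log2}$. On each block, a $m$-point Gauss rule is used for $1/(\xi+s)$. This function is analytic in $s$ with a pole at $-\xi\le0$, at relative distance at least $1/2$ from the block after rescaling, so the ellipse parameter is $\ge3$ uniformly in $\xi$ and the block error is $\lesssim3^{-2m}$ times the block mass. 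The near-singular block at $s=\theta$ would be handled with Gauss--Jacobi to absorb $(s-\theta)^{\alpha-1}$.

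With $N=Km$ I balance $(1-\alpha)K\log2\approx 2m\log3$ under the constraint $Km=N$. This gives exponent $\sqrt{2(\log3)(\log2)(1-\alpha)N}$ up to the factor conventions, which I would track to match $c_1$. All weights are positive, so the result lies in $\mathcal S_N^{+}$. Combining the upper and lower bounds gives $\log E_N\asymp-\sqrt N$.

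I expect the main obstacle to be bookkeeping of constants. The first point is matching exactly $\rho'$ and the factor $8$ in (i), which needs the right ellipse choice with the bound $M$ kept uniform in $\xi\in[0,\Xi]$. The second point is verifying that the dyadic blocks give an ellipse parameter uniformly $\ge3$ for all $\xi\ge0$, including $\xi=0$ where the pole approaches the block; this is why the blocks start at $s=\theta>0$. If the constants do not match exactly, I would present them as the outcome of that explicit choice and retain the qualitative rates, which are robust.
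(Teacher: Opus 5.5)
Your architecture matches the paper's: the Stieltjes representation, the substitution $s=\theta/v$ onto a Jacobi weight, a Gauss rule with a Bernstein-ellipse error bound, the M\"obius reduction to $u^{1-\alpha}$ with Stahl for the lower bound, and a dyadic positive Gauss construction with a Gauss--Jacobi first block and a tail. Your balance $(1-\alpha)K\log 2\approx 2m\log 3$ does reproduce $c_1$.

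The gap is in part (i). The geometry you wrote down is wrong and would not deliver the stated $\rho'$ or the constant $8$.

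\begin{itemize}
\item \emph{Pole location.} Since $\xi\le\Xi$, the worst pole is at $v=-\theta/\xi\le-\tau$, not $-\tau^{-1}$. In the $[-1,1]$ coordinate it sits at $-(1+2\tau)$.
\item \emph{What $\rho'$ is.} The parameter $\rho'=1+\tau+\sqrt{\tau(2+\tau)}$ is neither the ellipse through that pole nor related to it by squaring. Since $(\rho'+\rho'^{-1})/2=1+\tau$, it is the ellipse whose leftmost point is $v=-\tau/2$, halfway to the worst pole.
\item \emph{Why the halfway choice works.} On $\overline{E_{\rho'}}$ one has $\Re(\theta+\xi v)\ge\theta-\Xi\tau/2=\theta/2$ uniformly in $\xi\in[0,\Xi]$, hence $M\le 2/\theta$. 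Your own route then gives exactly the stated bound: positivity of the weights gives error $\le 2\mu\,E_{2N-1}(f_\xi)$, and Bernstein's estimate gives $E_{2N-1}(f_\xi)\le 2M\rho'^{-2N}/(1-\rho'^{-1})$. With $\mu=\theta^\alpha$ this is $8\theta^{\alpha-1}\rho'^{-2N}/(1-\rho'^{-1})$, as the paper obtains.
\item \emph{Your proposed ellipse fails.} Taking an ellipse ``slightly inside the pole'' makes $M$ blow up and gives no uniform constant.
\item \emph{The exponent.} The $-2N$ comes from exactness of the rule to degree $2N-1$, not from squaring $\rho'$.
\end{itemize}

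Two smaller repairs are also needed. First, the weight must absorb the factor $v$. If you keep $f_\xi(v)=v/(\theta+\xi v)$, the remaining weight is $\theta^\alpha v^{-1-\alpha}(1-v)^{\alpha-1}$. This is the image of the infinite-mass measure $(s-\theta)^{\alpha-1}ds$, so it is not integrable at $v=0$ and no Gauss rule exists. The correct split is $d\sigma=c_\alpha\theta^\alpha v^{-\alpha}(1-v)^{\alpha-1}dv$ with $c_\alpha=\sin(\pi\alpha)/\pi$. This has mass $\theta^\alpha$, takes $f_\xi=1/(\theta+\xi v)$, and yields residues $\beta_j=w_j/v_j$ and poles $\theta/v_j$.

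Second, in part (ii) the per-block error is not merely $3^{-2m}$ times the block mass. That mass grows like $b^\alpha$, which would add a factor $2^{\alpha K}$ and spoil the balance. You need the sup factor $\sup_{\overline{E_3}}|\xi+s|^{-1}\le 3/(2b)$ on $[b,2b]$. This gives a per-block error $\lesssim b^{\alpha-1}3^{-2m}$, whose dyadic sum is bounded independently of $K$; this is exactly the paper's bookkeeping.

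With these corrections your argument coincides with the paper's proof.
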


Complete proofs are given in Appendix~\ref{app:rates};
Figure~\ref{fig:thm42} displays the compact-window and half-line regimes.
The architectural point is not that rational layers are poor approximants
on finite windows; they are often excellent. The distinction is that a
finite rational model cannot reproduce the branch singularity exactly and
requires a growing number of poles as the effective horizon expands.

\begin{figure}[t]
\centering
\includegraphics[width=\textwidth]{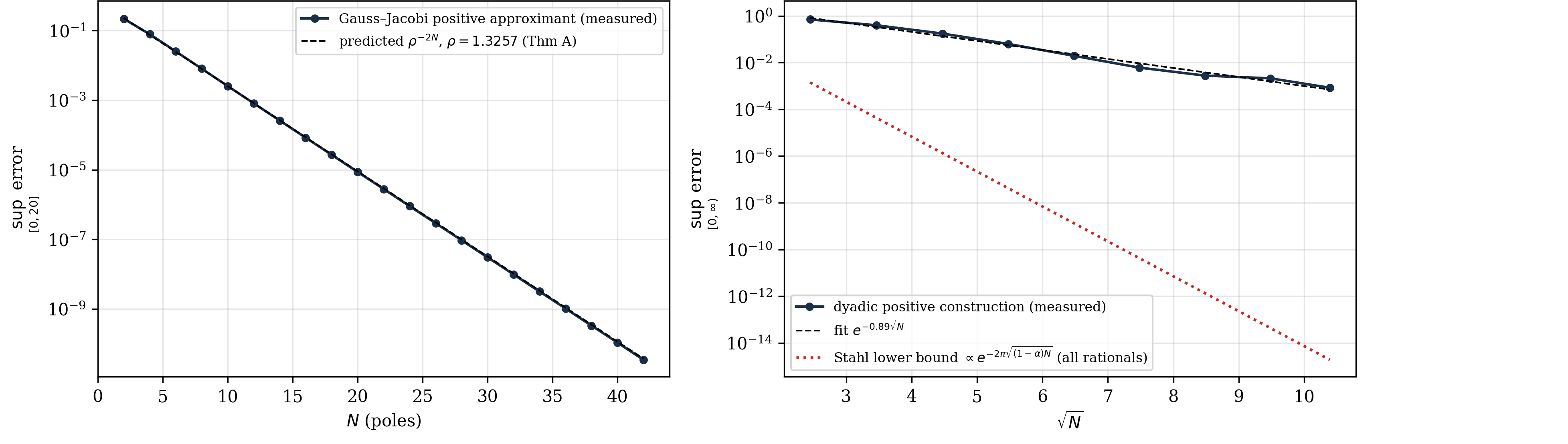}
\caption{Rational approximation of the tempered fractional symbol at
$(\alpha,\theta)=(0.7,0.4)$. \textbf{(A)} On the compact window
$[0,20]$, the positive Gauss--Jacobi approximation converges geometrically.
\textbf{(B)} On the half-line, the dyadic positive construction exhibits
the root-exponential $e^{-c\sqrt N}$ scaling predicted by the rational
approximation theory.}
\label{fig:thm42}
\end{figure}

\section{Stability-certified parametrizations}
\label{sec:cert}

\begin{theorem}[Hard subcriticality by construction]
\label{thm:cert}
Fix a margin \(m\in(0,1)\). For a modal layer define its total positive
memory mass by \(G^{\rm tot}_j=G+H_j(0)\), with the rational correction in
the positive Stieltjes form of Definition~\ref{def:atom}. Then either of
the following parametrizations is subcritical for every trainable weight.
\begin{enumerate}[label=\textup{(\alph*)},leftmargin=0.85cm]
\item \textup{(signed modal)}
\[
a_j=\frac{(1-m)c_\lambda}{G^{\rm tot}_j}
-\operatorname{softplus}(w_j),\qquad w_j\in\R;
\]
this spans the entire strict half-line
\(a_j<(1-m)c_\lambda/G^{\rm tot}_j\), including arbitrarily strong
inhibitory gains;
\item \textup{(nonnegative matrix)} choose a fixed nonnegative support
mask \(M\), let
\(\widetilde A=M\odot\operatorname{softplus}(W)\), and set
\[
A=q\,\frac{(1-m)c_\lambda}
{G^{\rm tot}\,r(\widetilde A)}\,\widetilde A,
\qquad q=\sigma(v)\in(0,1).
\]
Then \(r(A)G^{\rm tot}< (1-m)c_\lambda\). For any fixed support pattern,
the closure of this parametrization contains every nonnegative matrix
satisfying that strict margin.
\end{enumerate}
In both cases the continuous-time Volterra system has margin at least
\(m c_\lambda\). Under the stated Stieltjes/subcritical assumptions, the consistent BDF2
convolution-quadrature symbol remains in the closed right half-plane, so the
continuous zero-free characteristic argument is preserved at the discrete
level. Positive-realness is guaranteed
analytically by the atom parametrization \(\beta_{jk}\ge0,r_{jk}>0\);
a frequency grid may be used as a diagnostic but is not a mathematical
certificate.
\end{theorem}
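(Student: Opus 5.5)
The plan is to reduce everything to the characteristic function $D_j(\xi)=\xi+c_\lambda-a_j\Psi_j(\xi)$ and its matrix analogue $(\xi+c_\lambda)I-A\Psi(\xi)$. We show these are zero-free (indeed invertible) on $\{\Re\xi\ge -\epsilon\}$, or at least on the closed right half-plane with a quantified margin.

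The basic tool is the Stieltjes property. Both $\ghat_{\alpha,\theta}$ and $H_j$ are Stieltjes functions, being Laplace transforms of completely monotone, nonnegative kernels. Hence for $\Re\xi\ge0$ we have $|\Psi_j(\xi)|\le\Psi_j(\Re\xi)\le\Psi_j(0)=G^{\rm tot}_j$, using the positivity of the kernel in the time domain: $|\int e^{-\xi t}g|\le\int e^{-\Re\xi\, t}g$.

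\textbf{Part (a).} First, the algebra: the map $w\mapsto(1-m)c_\lambda/G^{\rm tot}_j-\operatorname{softplus}(w)$ is a bijection of $\R$ onto the open half-line, since softplus maps $\R$ onto $(0,\infty)$. This gives the spanning claim. Next we check subcriticality, splitting on the sign of $a_j$.
\begin{itemize}
\item If $a_j\ge0$, then $|a_j\Psi_j(\xi)|\le a_jG^{\rm tot}_j<(1-m)c_\lambda$. Since $\Re(\xi+c_\lambda)\ge c_\lambda$, we get $|D_j(\xi)|\ge m c_\lambda$ on the closed right half-plane.
\item If $a_j<0$, the term $-a_j\Psi_j$ has nonnegative real part, because Stieltjes functions map $\{\Re\xi\ge0\}$ into $\{\Re\ge0\}$. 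This follows from $\Re\frac{1}{\xi+r}>0$ and the representation of $(\xi+\theta)^{\alpha-1}$ as a Stieltjes integral. Hence $\Re D_j\ge c_\lambda$.
\end{itemize}
In both cases $D_j$ is bounded away from zero by $mc_\lambda$ on the closed right half-plane, which is the claimed margin. The strong inhibition case needs no bound on $|a_j|$, which is exactly why the sign split is used.

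\textbf{Part (b).} By homogeneity of the spectral radius, $r(A)=q(1-m)c_\lambda/G^{\rm tot}<(1-m)c_\lambda/G^{\rm tot}$ as long as $r(\widetilde A)>0$. One must either assume the mask yields a positive spectral radius (e.g.\ irreducible support) or treat the nilpotent case separately; I would add this as a standing assumption on $M$. For the closure claim, take any nonnegative $A_\star$ with support in $M$ and $r(A_\star)G^{\rm tot}<(1-m)c_\lambda$. Approximate it by $\widetilde A=M\odot\operatorname{softplus}(W)$, letting the off-support entries be irrelevant and the zero entries on the support be limits as $W\to-\infty$. Then choose $q=r(A_\star)G^{\rm tot}/((1-m)c_\lambda)\in[0,1)$, attained in the closure of $\sigma(v)$.

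For the characteristic bound, invoke the Perron--Frobenius threshold of \citet{herrera2026cnsns}. On $\Re\xi\ge0$ we have $r(A\Psi(\xi))\le r(A)|\Psi(\xi)|\le r(A)G^{\rm tot}<(1-m)c_\lambda$, while $|\xi+c_\lambda|\ge c_\lambda$. Hence $(\xi+c_\lambda)I-A\Psi(\xi)$ is invertible. A Neumann series with a nonnegative-matrix domination argument, using $|A\Psi|\le A\,G^{\rm tot}$ entrywise, gives the margin $mc_\lambda$ in the spectral sense.

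\textbf{Discrete level.} For BDF2 convolution quadrature the discrete symbol is $\delta(\zeta)/\Delta t$ with $\delta(\zeta)=\tfrac32-2\zeta+\tfrac12\zeta^2$. This maps the closed unit disk into the closed right half-plane (A-stability of BDF2). The discrete characteristic function is therefore the continuous one evaluated at $\xi=\delta(\zeta)/\Delta t$, so the bounds above transfer verbatim. The final remark on positive-realness is immediate from $\beta_{jk}\ge0$ and $r_{jk}>0$.

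\textbf{Main obstacle.} I expect the hardest step to be the closure statement in (b) together with the degenerate case $r(\widetilde A)=0$. Here I would first try restricting to irreducible masks and then extending by a limiting argument.
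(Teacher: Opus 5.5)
Your proposal is correct and follows the paper's proof essentially step for step: in (a) you use the softplus range and the sign split (the Stieltjes positive-real property for $a_j<0$, the modulus bound $|\Psi_j(\xi)|\le G^{\rm tot}_j$ for $a_j\ge0$); in (b) you use homogeneity of the spectral radius together with the cited Perron--Frobenius threshold; and at the discrete level you use $\Re\delta(\zeta)\ge0$ on $|\zeta|\le1$ for BDF2. Your flag on masks with $r(\widetilde A)=0$ is a legitimate refinement the paper omits, and in the closure argument you should let $q$ vary along the approximating sequence, $q_k=r(\widetilde A_k)G^{\rm tot}/((1-m)c_\lambda)$: this gives $A=\widetilde A_k\to A_\star$ exactly and also covers targets with $r(A_\star)=0$, where your fixed choice $q=0$ would collapse $A$ to zero.
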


\begin{remark}[What the certificate does not cover]
If an unconstrained rational correction with signed residues is added,
its contribution to the zero-frequency mass and positive-real property
must be certified separately (for example by an analytic positive-real
realization test). The simple branching certificate based only on the
fractional mass \(G\) is not valid after adding an arbitrary \(H_j\).
\end{remark}

The practical advantage of the parametrization is that stability is a
property of the model class, not a penalty that must be checked after
training. The modal map does not artificially cap inhibition, and the
matrix map has an independent radial coordinate \(q\), so optimization is
not confined to a single Perron-radius surface.

\section{Graphon transferability}
\label{sec:transfer}

Graphon limits provide a natural language for graph-size transfer, but parameter sharing alone is not a convergence theorem \citep{ruiz2020,ruiz2023}.

\begin{theorem}[Finite-horizon graphon transfer for a consistent learned family]
\label{thm:transfer}
Let $W$ be a regular symmetric graphon in the class considered by
\citet[Section~7]{herrera2026cnsns}. For a compact certified parameter set
$K$, let $\mathcal S_k$ be the true size-$k$ solution operator and
$\mathcal N_k^\vartheta$ the corresponding size-independent fLNO family,
acting on a bounded input class $\mathcal B$. Assume that there are common
limit operators $\mathcal S_W$ and $\mathcal N_W^\vartheta$ such that,
uniformly for $\vartheta\in K$,
\[
\|\mathcal S_k-\mathcal S_W\|_{\mathcal B,T}\le C_S\eta_k,
\qquad
\|\mathcal N_k^\vartheta-\mathcal N_W^\vartheta\|_{\mathcal B,T}
\le C_N\eta_k,
\]
where $\|\cdot\|_{\mathcal B,T}$ denotes the operator norm from
$\mathcal B$ to $C([0,T])$ and $\eta_k\to0$. If parameters
$\widehat\vartheta\in K$ learned at size $n$ satisfy the genuine
operator bound
\[
\varepsilon_{\rm op}:=
\|\mathcal N_n^{\widehat\vartheta}-\mathcal S_n\|_{\mathcal B,T},
\]
then deployment of the same parameters at size $m$ obeys
\[
\|\mathcal N_m^{\widehat\vartheta}-\mathcal S_m\|_{\mathcal B,T}
\le
\varepsilon_{\rm op}+(C_N+C_S)(\eta_n+\eta_m).
\]
If $\eta_k$ is specified at the coefficient/operator level rather than at
the solution-operator level, the same conclusion holds with the usual
finite-horizon Volterra--Gronwall factor $C_0e^{C_1T}$ multiplying the
right-hand side. In neither formulation does a small error on finitely many
training trajectories constitute an estimate of $\varepsilon_{\rm op}$.
\end{theorem}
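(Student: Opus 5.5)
The first bound follows from a four-term triangle inequality that routes the comparison through the common graphon limit. The size-$n$ and size-$m$ operators act on different spaces, so I will read $\|\mathcal N_k-\mathcal S_k\|_{\mathcal B,T}$ as computed after the standard step-function lifting of size-$k$ inputs and outputs into $L^2([0,1])$-valued paths. Since the hypotheses compare $\mathcal S_k$ and $\mathcal N_k^\vartheta$ with $\mathcal S_W$ and $\mathcal N_W^\vartheta$, all operators already live in this common setting, with the same bounded input class $\mathcal B$ and the same target norm in $C([0,T])$. With this convention, for $\widehat\vartheta\in K$ I write
\[
\mathcal N_m^{\widehat\vartheta}-\mathcal S_m
=(\mathcal N_m^{\widehat\vartheta}-\mathcal N_W^{\widehat\vartheta})
+(\mathcal N_W^{\widehat\vartheta}-\mathcal N_n^{\widehat\vartheta})
+(\mathcal N_n^{\widehat\vartheta}-\mathcal S_n)
+(\mathcal S_n-\mathcal S_W)+(\mathcal S_W-\mathcal S_m).
\]
Every operator in this decomposition maps the same class $\mathcal B$ into the same space, so $\|\cdot\|_{\mathcal B,T}$ is subadditive; it is a supremum over $\mathcal B$ of a norm of a difference.

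The key steps, in order, are as follows.

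First, I check that the uniformity in $\vartheta\in K$ lets me apply the consistency hypothesis at the data-dependent point $\widehat\vartheta$. This is exactly why the statement insists that $\widehat\vartheta\in K$ and that the bounds hold uniformly. Second, I bound the two $\mathcal N$-consistency terms by $C_N\eta_m$ and $C_N\eta_n$, and the two $\mathcal S$-consistency terms by $C_S\eta_n$ and $C_S\eta_m$. Third, the middle term is $\varepsilon_{\rm op}$ by definition. Summing gives $\varepsilon_{\rm op}+C_N(\eta_n+\eta_m)+C_S(\eta_n+\eta_m)$, which is the claimed bound.

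For the coefficient-level variant, I assume $\eta_k$ bounds the distances from the size-$k$ coefficients to the graphon coefficients: the excitation operator $f_\vartheta(L_k)$, or $A_k$, and the Laplacian term, measured in operator norm. I then convert these into solution-operator bounds. Writing the difference of two solutions as the solution of the Volterra system \eqref{eq:volterra} with a forcing given by the coefficient perturbation applied to the reference solution, I obtain a Volterra inequality in $\sup_{s\le t}\|\cdot\|$. The kernel $g_{\alpha,\theta}$ is in $L^1_{\rm loc}$, and the coefficients are bounded uniformly over the compact set $K$, which is where the certified, bounded parametrization of Theorem~\ref{thm:cert} is used. Gronwall's lemma for Volterra inequalities with an integrable kernel then gives a factor $C_0e^{C_1T}$. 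Here $C_0$ depends on $\sup_{\mathcal B}$ of the reference solution on $[0,T]$, and $C_1$ on the coefficient bounds and on $\|g\|_{L^1(0,T)}\le G$. This yields the solution-level hypotheses with $C_S,C_N$ replaced by $C_0e^{C_1T}$ times the coefficient constants, and the first argument then applies verbatim.

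The final sentence of the theorem is interpretive. I would justify it by noting that a finite set of trajectories does not determine the supremum over $\mathcal B$ without additional compactness or sampling assumptions.

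The main obstacle is making the common-space identification rigorous, so that the triangle inequality is legitimate across sizes. This means checking that the lifting and sampling maps are uniformly bounded, so that they do not introduce hidden constants, and that $\mathcal B$ is the same class at every size. I would handle this first by fixing the step-function embedding as part of the definition of $\|\cdot\|_{\mathcal B,T}$. Under that embedding the first part reduces to pure bookkeeping, and the Gronwall step is the only part that needs analytic estimates.
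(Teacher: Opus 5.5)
Your proposal is correct and is essentially the paper's argument. The paper uses a three-term triangle inequality through the two limit operators and then bounds the middle term $\|\mathcal N_W^{\widehat\vartheta}-\mathcal S_W\|_{\mathcal B,T}$ by routing through size $n$; unrolled, that is exactly your telescoping sum, which has five terms rather than the four you announce. The paper handles the coefficient-level variant the same way you sketch, by subtracting the mild equations and applying a finite-horizon Volterra--Gronwall estimate, and your step-function embedding just makes explicit the common-space convention that the paper leaves implicit.
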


\begin{remark}[Parameter sharing is not graphon consistency]
Using the same finite-dimensional parameter vector at two graph sizes is
necessary for transfer, but it does not imply the second consistency bound in
Theorem~\ref{thm:transfer}. That bound is an explicit hypothesis and must be
justified by the chosen graph-filter realization and graph sequence. The
numerical experiment below therefore demonstrates parameter transfer only;
it is not presented as a verification of graphon consistency.
\end{remark}

\begin{corollary}[Time-uniform transfer under a strong contraction certificate]
\label{cor:uniform}
Assume in addition that the true and learned graphon families satisfy the
same uniform contraction margin
$\|A\|G\le(1-m)\omega$, where
$\omega=\min(c_x,c_\lambda)$, and that the finite-to-graphon consistency
bounds above hold in the corresponding time-uniform operator norm. Then the
feedback kernel
\[
K(t)=\|A\|\,(e^{-\omega\cdot}*g)(t)
\]
has $L^1$-mass $\|K\|_1=\|A\|G/\omega\le1-m$, the error resolvent is
uniformly summable, and
\[
\|\mathcal N_m^{\widehat\vartheta}-\mathcal S_m\|_{\mathcal B,\infty}
\le \frac{C}{m}
\left[\varepsilon_{\rm op}+(\eta_n+\eta_m)\right],
\]
with $C$ independent of graph size.
\end{corollary}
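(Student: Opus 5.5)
The plan is to run the same triangle-inequality argument as in Theorem~\ref{thm:transfer}, with the finite-horizon norm $\|\cdot\|_{\mathcal B,T}$ replaced by the time-uniform norm $\|\cdot\|_{\mathcal B,\infty}$. The only new ingredient is an explanation of why no factor growing in $T$ appears. The first step is the kernel mass. Write the error dynamics of the $(x,\lambda)$ system in block-triangular form, as in the proof of Theorem~\ref{thm:exact}. The intensity component is driven by $A\,(g*x)$, and the state component is filtered by a semigroup of decay rate at least $c_x\ge\omega$. A perturbation therefore propagates through the loop with a scalar majorant $K(t)=\|A\|\,(e^{-\omega\cdot}*g)(t)$. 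Since $\int_0^\infty e^{-\omega t}\,dt=1/\omega$ and $\int g=G$, Young's inequality for $L^1$ convolutions gives $\|K\|_1=\|A\|G/\omega\le 1-m$ under the stated contraction margin. The damping $-c_\lambda$ on the $\lambda$ equation likewise contributes a factor at most $1/\omega$. This is why the margin is formulated with $\omega=\min(c_x,c_\lambda)$.

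The second step is the uniformly summable resolvent. Because $K\ge0$ and $\|K\|_1\le 1-m<1$, the Neumann series $R=\sum_{k\ge1}K^{*k}$ converges in $L^1(0,\infty)$. It satisfies $\|R\|_1\le\sum_k(1-m)^k=(1-m)/m$, so $1+\|R\|_1\le 1/m$. The comparison (Paley--Wiener/Gronwall) lemma for Volterra inequalities then says the following. If $e(t)\le \delta(t)+(K*e)(t)$ with $\delta$ bounded, then $\sup_t e(t)\le(1+\|R\|_1)\sup_t\delta(t)\le\sup_t\delta(t)/m$. The crucial point is that this bound uses only $\|K\|_1$. Since $\|A\|G\le(1-m)\omega$ holds for the true and learned graphon families alike, and hence uniformly in the graph size under the assumed consistency, the constant is size independent.

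The third step assembles the estimate. Decompose
\[
\mathcal N_m^{\widehat\vartheta}-\mathcal S_m
=(\mathcal N_m^{\widehat\vartheta}-\mathcal N_W^{\widehat\vartheta})
+(\mathcal N_W^{\widehat\vartheta}-\mathcal N_n^{\widehat\vartheta})
+(\mathcal N_n^{\widehat\vartheta}-\mathcal S_n)
+(\mathcal S_n-\mathcal S_W)+(\mathcal S_W-\mathcal S_m).
\]
If the consistency hypotheses are already at the solution-operator level in $\|\cdot\|_{\mathcal B,\infty}$, this directly gives $\varepsilon_{\rm op}+(C_N+C_S)(\eta_n+\eta_m)$. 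In that case $C/m$ holds trivially, since $m<1$. In the coefficient-level formulation, each defect is a forcing term $\delta$ in the error Volterra inequality above. The second step then converts it with the factor $1/m$ in place of the Gronwall factor $C_0e^{C_1T}$. One subtlety needs care: $\varepsilon_{\rm op}$ must itself be measured in the time-uniform norm, or else be re-propagated through the same contraction. I will read the hypothesis as the former.

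The main obstacle is the rigorous majorization by the scalar kernel $K$. This requires bounding the vector-valued error in the norm in which $\|A\|$ is measured, uniformly along the whole graph sequence. The step that needs care is controlling $\|e^{-(c_xI+\kappa L_G)t}\|\le e^{-c_xt}$. This holds because $L_G\succeq0$ is symmetric, so the bound is valid in the $\ell^2$/operator norm consistent with the graphon $L^2$ setting. I would carry out the whole argument in that norm.
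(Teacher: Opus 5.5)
Your proposal is correct and follows the paper's own proof: the $L^1$ mass bound $\|K\|_1=\|A\|G/\omega\le 1-m$, the Neumann resolvent bound $1+\|R\|_1\le 1/m$ in place of the Gronwall factor, and the triangle decomposition of Theorem~\ref{thm:transfer} rerun in the time-uniform norm, with $\varepsilon_{\rm op}$ read in that norm as the paper implicitly does. Two small precisions. First, the mass identity is an equality by Tonelli, since $e^{-\omega t}\ge0$ and $g\ge0$; Young's inequality alone only gives the upper bound. Second, because the system is block triangular, the only genuine feedback loop is $\lambda\mapsto e^{-c_\lambda\cdot}*(A\,g*\lambda)$, whose kernel is dominated by $K$ because $c_\lambda\ge\omega$, while $x$ enters merely as a feedforward forcing controlled by $\|e^{-(c_xI+\kappa L_G)t}\|\le e^{-c_xt}$.
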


The synthetic transfer experiment below should therefore be read as a
\emph{physics-parameter transfer} experiment: the size-independent memory
and dissipation parameters are learned on one graph and composed with the
known spectral structure of the deployment graph. This is a useful test of
discretization transfer, but it is not by itself an empirical estimate of
the operator norm \(\varepsilon_{\rm op}\).

\section{Controlled experiments and operator benchmark}
\label{sec:experiments}

\paragraph{Recovery and expressivity (Figure~\ref{fig:proto}).}
Fitting one fractional atom by nonlinear least squares through the
CQ-BDF2 forward map to noisy subcritical trajectories
($\sigma=10^{-2}$, $T=40$) recovers, over 12 replications, $\alpha$ to
$6\times10^{-3}$, $\theta$ to $1.1\times10^{-2}$, the gain to
$2.4\times10^{-3}$, and the branching ratio $\rho$ to
$\mathbf{2.9\times10^{-4}}$ mean absolute error: the stability diagnostic
is the best-identified direction (Remark~\ref{rem:ident}). At criticality
with the heavy-tail kernel ($\gamma=\tfrac12$), a six-pole rational
surrogate fitted to sup-error $4\times10^{-3}$ on $[0,50]$ extrapolates
to the spurious constant $0.032$ while the true intensity decays as
$t^{-1/2}$; a single heavy-tail atom fitted on the same window recovers
$\gamma=0.5000$ --- Theorem~\ref{thm:qualitative} in one picture.

\begin{figure}[t]
\centering
\includegraphics[width=\textwidth]{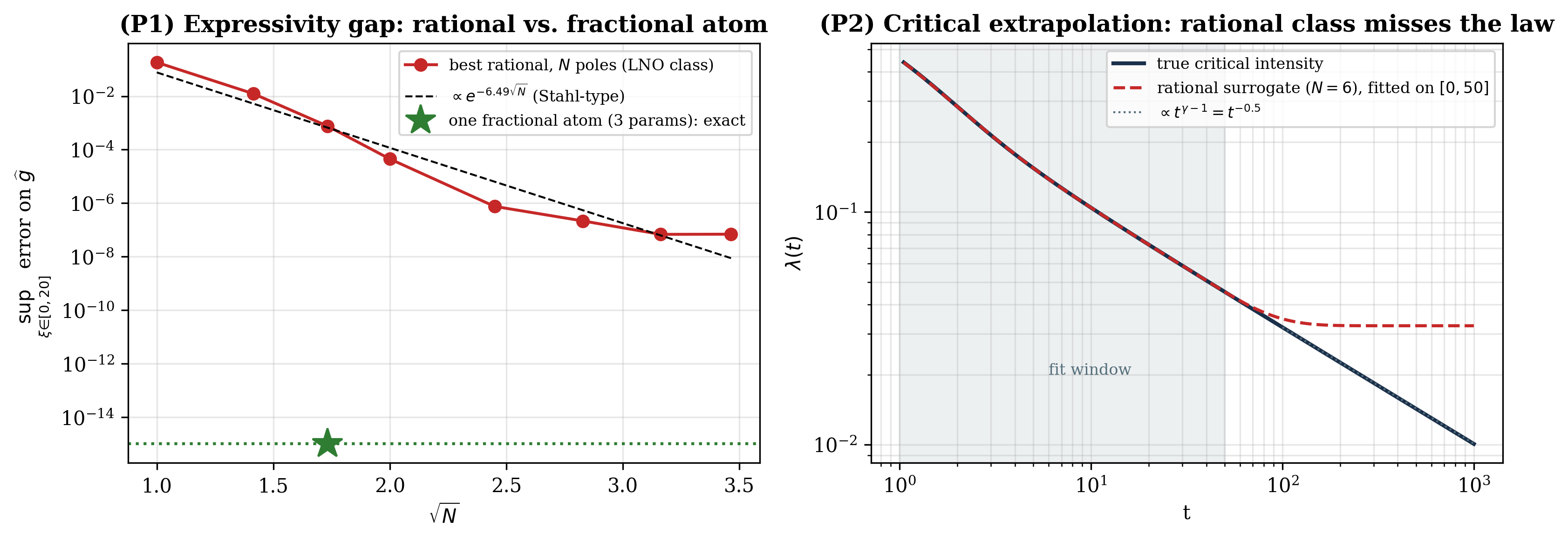}
\caption{\textbf{(P1)} Best $N$-pole Markov rational error on the
tempered symbol over $[0,20]$ versus $\sqrt N$ (a range on which the
compact-window geometric regime is observed), with the exact atom for
reference. \textbf{(P2)} Critical heavy-tail intensity: a rational
surrogate matching the shaded fit window to $4\times10^{-3}$ extrapolates
to a spurious positive constant; the truth follows $t^{\gamma-1}$.}
\label{fig:proto}
\end{figure}

\subsection{Common-data operator benchmark}
\label{sec:benchmark}

We next test whether the structural advantages above survive a direct
operator-learning comparison. Every model receives exactly the same maps
``forcing history $\mapsto$ response history,'' generated from the
fractional Volterra system with
\[
\Delta t=0.1,\qquad T_{\rm train}=9.6,\qquad T_{\rm test}=19.2,
\qquad (\alpha,\theta,c_\lambda,\rho)=(0.65,0.12,1,0.90).
\]
The common split contains 96 training, 24 validation, and 32 test
trajectories. Reference trajectories are generated directly by the causal
time-domain recurrence, using an implicit local-state update and midpoint
quadrature of the fractional memory kernel; no impulse-response convolution
is used to synthesize arbitrary forcings. The forcing is a smooth random
mixture of low-frequency Fourier components, Gaussian pulses, and piecewise
biases and is set to zero after the training horizon, so evaluation on
$(T_{\rm train},2T_{\rm train}]$ is genuine autonomous extrapolation. We report two independent training
seeds. The structured models and FNO-1D are optimized with a 500-epoch cap,
whereas DeepONet uses a 1000-epoch cap; in every case the reported checkpoint
is selected by validation loss. The comparison is therefore common-data and
convergence-oriented rather than an equal-wall-clock benchmark.

The baselines are deliberately named by what was actually implemented. The
``positive rational'' model is a 12-pole positive pole--residue operator,
the ``unconstrained rational'' model has signed free residues, and the
``linear SSM'' is a diagonal 16-state linear realization. They represent
the rational classes underlying LNO/state-space memory layers but are not
claimed to be the authors' official LNO or S4 implementations. FNO-1D and
DeepONet use standard spectral-convolution and branch--trunk constructions.

\paragraph{Computational complexity.}
The structural advantage of fLNO is parameter economy and certification, not
a claim of universal wall-clock superiority. A straightforward history-based
fractional convolution costs $O(N_t^2)$ per scalar trajectory, whereas an
$N_p$-pole Markov/state-space realization admits an $O(N_pN_t)$ recurrence.
Standard fast-convolution and block convolution-quadrature techniques can
accelerate the fractional history evaluation toward near-linear-logarithmic
temporal cost, while graph-spectral transforms add a graph-dependent cost.
The timings in Table~\ref{tab:operator-benchmark} therefore document the
particular implementations used here and should not be read as asymptotic
complexity rankings.

\begin{table}[t]
\centering
\caption{Common-data operator benchmark at $\rho=0.90$. Errors are mean
$\pm$ standard deviation across two seeds. ``Tail'' is the relative
$L^2$ error on $(T_{\rm train},2T_{\rm train}]$. Branch error is the
absolute error of the explicit branching parameter when one exists and the
zero-frequency mass proxy for the unconstrained rational model. No
intrinsic branching coordinate is reported for the generic SSM, FNO, or
DeepONet baselines. Training times are included for scale but are
hardware- and implementation-dependent.}
\label{tab:operator-benchmark}
\small
\begin{tabular}{lrrrrr}
\toprule
Model & Params. & In-window & Tail & Branch error & Train (s)\\
\midrule
fLNO & 3 & $0.140\pm0.025\%$ & $2.20\pm0.16\%$ & $0.0264\pm0.0011$ & 25.2\\
Positive rational & 25 & $0.076\pm0.020\%$ & $2.55\pm0.01\%$ & $0.0374\pm0.0009$ & 19.3\\
Unconstrained rational & 24 & $0.244\pm0.057\%$ & $3.92\pm0.32\%$ & $0.0517\pm0.0080$ & 19.5\\
Linear SSM & 49 & $0.333\pm0.086\%$ & $3.92\pm1.46\%$ & --- & 17.8\\
FNO-1D & 4241 & $3.54\pm0.88\%$ & $489\pm123\%$ & --- & 17.6\\
DeepONet & 18817 & $11.3\pm1.4\%$ & $1170\pm119\%$ & --- & 8.0\\
\bottomrule
\end{tabular}
\end{table}

Table~\ref{tab:operator-benchmark} and Figure~\ref{fig:operator-benchmark}
show a deliberately nuanced result. The positive rational operator is
slightly more accurate than the fLNO inside the training window, confirming
that finite-pole models can approximate fractional memory extremely well on
compact horizons. On the autonomous tail the ordering reverses slightly:
the fLNO has the smallest mean error among the structured models while using
only three trainable parameters. The large FNO-1D and DeepONet errors beyond
$T_{\rm train}$ are specific to this causal-memory benchmark and should not
be read as a general ranking of neural-operator families.

\begin{figure}[t]
\centering
\includegraphics[width=\textwidth]{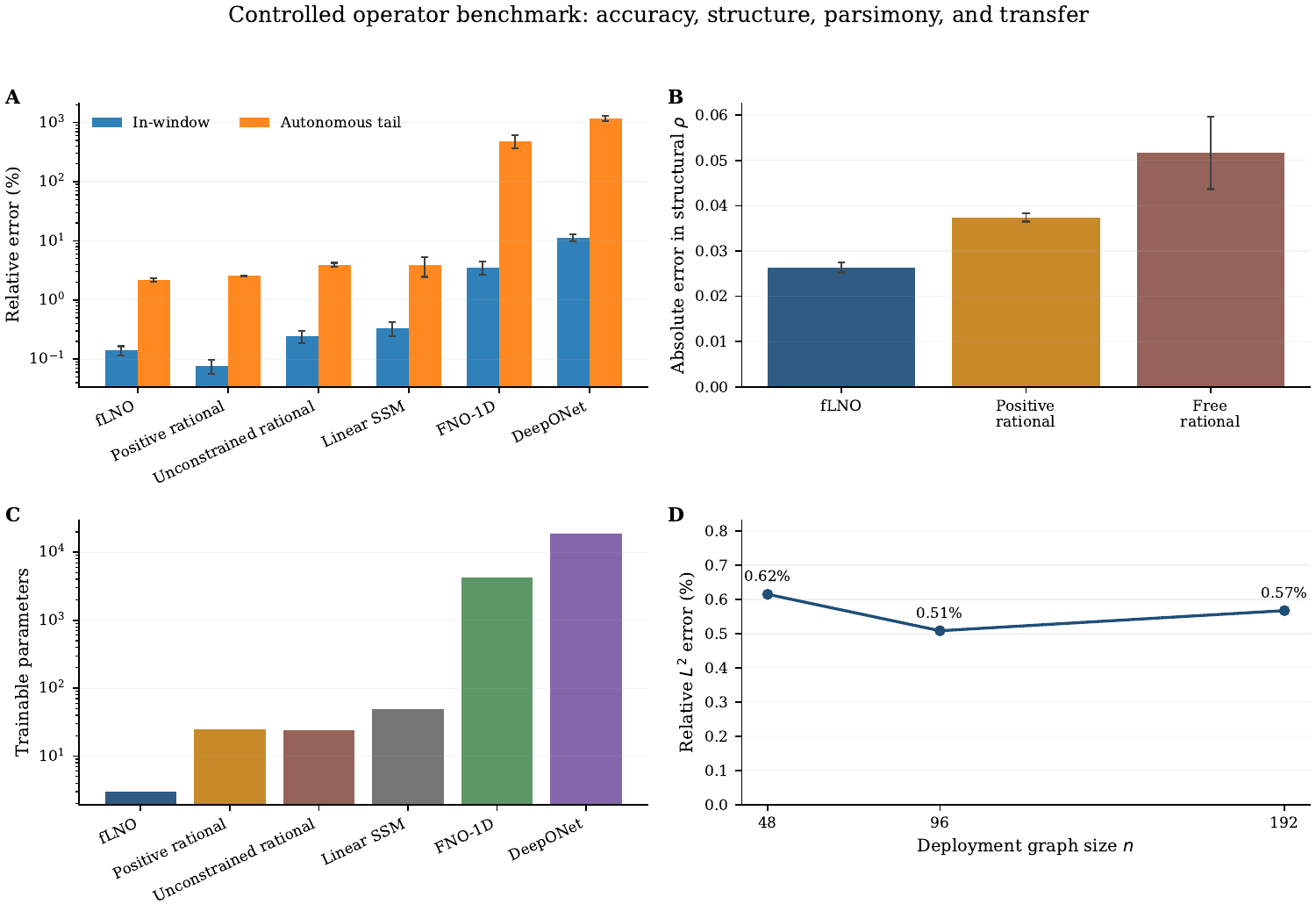}
\caption{Common-data operator benchmark. \textbf{(A)} In-window and doubled-horizon tail
relative errors (log scale). \textbf{(B)} Error in the structural branching coordinate for the three models in which
that quantity is explicit or defined by zero-frequency mass.
\textbf{(C)} Trainable parameter count. \textbf{(D)} Transfer of one learned,
size-independent fLNO parametrization across deployment graph sizes. Error
bars in panels A--B denote one standard deviation across the two main
benchmark seeds.}
\label{fig:operator-benchmark}
\end{figure}

\subsection{Near-critical accuracy versus structural identification}
\label{sec:nearcritical-benchmark}

To separate finite-horizon approximation from recovery of the stability
coordinate, we refit the fLNO and the positive rational operator at
$\rho\in\{0.70,0.90,0.97\}$ with the same 500-epoch cap. The result is not a
monotone ``winner.'' At $\rho=0.97$ the 25-parameter rational approximation
has the smaller finite-horizon tail error ($0.61\%$ versus $2.18\%$), which
is entirely consistent with Theorem~\ref{thm:rates}. Yet its explicit
branching estimate is $0.885$, an absolute error of $0.0848$, whereas the
fLNO returns $0.953$, an error of $0.0170$. Thus excellent finite-window
prediction does not by itself identify the critical coordinate.

\begin{table}[t]
\centering
\caption{Near-critical comparison after convergence-oriented training.
The rational model may be more accurate over a finite doubled horizon, while
the fLNO retains a substantially more faithful explicit branching
coordinate near the boundary.}
\label{tab:nearcritical-benchmark}
\small
\begin{tabular}{c l r r r r}
\toprule
$\rho$ & Model & In-window & Tail & $\widehat\rho$ & $|\widehat\rho-\rho|$\\
\midrule
0.70 & fLNO & 0.035\% & 0.814\% & 0.708 & 0.00845\\
0.70 & Positive rational & 0.255\% & 11.4\% & 0.810 & 0.10979\\
0.90 & fLNO & 0.101\% & 3.04\% & 0.879 & 0.02068\\
0.90 & Positive rational & 0.074\% & 5.11\% & 0.863 & 0.03692\\
0.97 & fLNO & 0.090\% & 2.18\% & 0.953 & 0.01696\\
0.97 & Positive rational & 0.038\% & 0.610\% & 0.885 & 0.08483\\
\bottomrule
\end{tabular}
\end{table}

\begin{figure}[t]
\centering
\includegraphics[width=\textwidth]{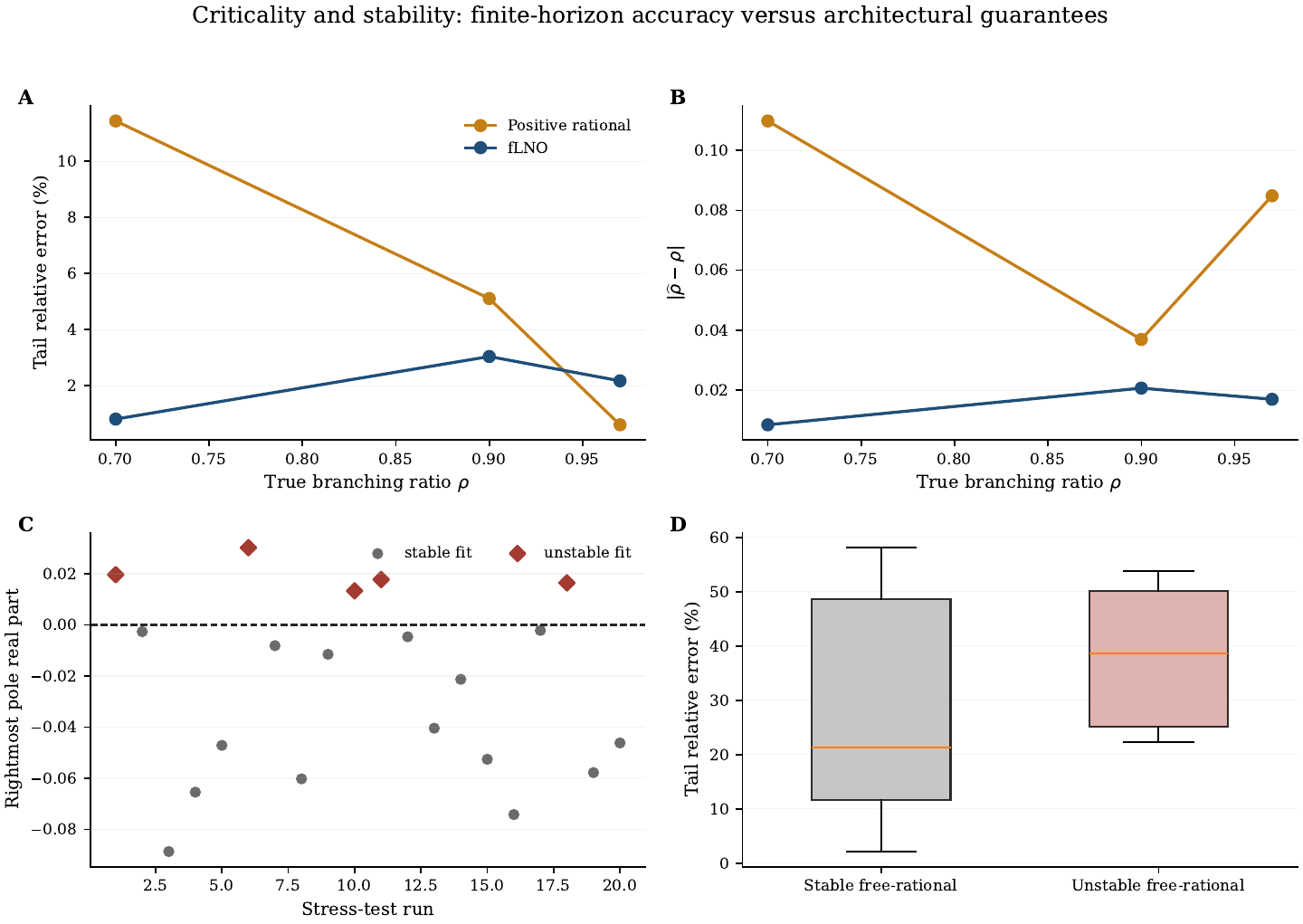}
\caption{Criticality and stability diagnostics. \textbf{(A)} Finite-horizon
tail error for fLNO and the positive rational pole--residue operator across
three true branching ratios. \textbf{(B)} Absolute error in the explicit
branching coordinate. \textbf{(C)} Rightmost-pole real parts from 20
unconstrained rational fits in the scarce/noisy stress test; diamonds mark
fits that cross the stability boundary. \textbf{(D)} Tail-error distribution
split by the resulting structural classification. Panels C--D are a
stress test, not a statement about the typical behavior of unconstrained
training.}
\label{fig:nearcritical-benchmark}
\end{figure}

\paragraph{Unconstrained stability under nominal and stress conditions.}
The nominal eight-seed audit at $\rho=0.97$ found no unstable unconstrained
rational fit: all learned rightmost poles remained in
$[-0.0145,-0.0106]$. We then deliberately stressed the same free
pole--residue class with only 12 training trajectories, Gaussian training-
response noise with standard deviation equal to $10\%$ of the training
response scale, and signed random residue initialization. Five of 20
validation-selected fits crossed the stability boundary, with the largest
rightmost-pole real part equal to $0.0302$ (Figure~\ref{fig:nearcritical-benchmark}C).
This experiment is intentionally adversarial and is not evidence that free
rational training is usually unstable. Its role is narrower: empirical
stability depends on data, initialization, and optimization, whereas the
fLNO and positive Stieltjes parametrizations remain inside their certified
stability region for every admissible weight vector.

\begin{table}[t]
\centering
\caption{Unconstrained-rational stability audit. The scarce/noisy protocol
is deliberately a stress test and is reported separately from the nominal
benchmark.}
\label{tab:stability-stress}
\small
\begin{tabular}{lrrrr}
\toprule
Protocol & runs & unstable fits & max $\Re\,\xi_\star$ & median tail error \\
\midrule
Nominal, full data & 8 & 0 & $-0.0106$ & $5.96\%$ \\
Scarce/noisy, signed random init. & 20 & 5 & $+0.0302$ & $31.9\%$ \\
\bottomrule
\end{tabular}
\end{table}

\subsection{Graph-size transfer with a fixed learned parametrization}
\label{sec:graph-benchmark}

Finally, we use the size-independent spectral law
\[
a(\ell)=\frac{\kappa}{1+\beta\ell}
\]
and learn $(\alpha,\theta,\kappa,\beta)$ on a graph of size $n=48$.
The true values $(0.65,0.16,0.42,0.75)$ are estimated as
$(0.6224,0.2081,0.4191,0.8194)$ and then frozen. Without retraining, the
same four parameters are composed with the spectra of deployment graphs of
sizes 96 and 192. Relative $L^2$ errors are respectively $0.615\%$,
$0.508\%$, and $0.567\%$ for $n=48,96,192$ (Figure~\ref{fig:operator-benchmark}D).
The error therefore stays at the same sub-percent scale when the graph size
quadruples. This experiment tests transfer of learned physical parameters
through a size-independent spectral parametrization; it is not an empirical
estimate of the operator norm appearing in Theorem~\ref{thm:transfer}.

\FloatBarrier
\section{Application I: a single-source relocated Chilean aftershock cohort}
\label{sec:seismic}

The seismic application is rebuilt from a single regional source rather than
from heterogeneous sequence files, retaining dependent events because aftershock clustering is the signal of interest for the Hawkes--Volterra model \citep{ogata1988}. We use the relocated Centro Sismol\'ogico
Nacional (CSN) catalog of \citet{potin2024}, archived as Zenodo
\url{https://doi.org/10.5281/zenodo.13146436}. The release contains
118,004 relocated events from 1982 to mid-2020 with origin time,
hypocentral coordinates, depth, location RMS, magnitude, and magnitude type.
The primary cohort comprises Tocopilla 2007, Maule 2010, Iquique 2014, and
Illapel 2015. These four sequences are analyzed with the same extraction,
completeness, fitting, and spatial-zoning code. This design removes a major
source of between-sequence heterogeneity while retaining event-level spatial
information. The more extensive catalog construction of \citet{candia2026}
provides a complementary benchmark for duplicate resolution, magnitude
homogenization, tectonic classification, and completeness analysis. We do
not decluster the Potin catalog: dependent aftershocks are the
self-excitation signal of interest.

\subsection{Cohort construction and completeness}

For each sequence, the clock is anchored to the reviewed USGS ComCat
mainshock origin and events are selected within 350 km and 365 days after
the mainshock. A one-minute exclusion around the external anchor prevents a
catalog representation of the mainshock from being counted twice; the
mainshock is then inserted once into day zero of the history. This convention
is particularly useful for Illapel, whose $M_w8.3$ mainshock is not present
as such in the relocated event file although its aftershock sequence is.
All four extracted cohorts have unique event-level latitude--longitude pairs
for essentially every retained event and non-degenerate depth distributions.

Magnitude is used only to define an occurrence threshold, not as a Hawkes
mark. Because the source file contains several CSN magnitude types, no claim
of event-by-event $M_w$ homogeneity is made. For each sequence we estimate a
conservative completeness threshold as the largest of (i) maximum curvature
plus 0.2 magnitude units, (ii) the first 90\% Gutenberg--Richter
goodness-of-fit threshold, and (iii) a local $b$-value stability criterion,
rounded upward to 0.1 magnitude units. The resulting thresholds are
$M_c=3.3$ (Tocopilla), $3.8$ (Maule), $3.1$ (Iquique), and $3.7$
(Illapel), leaving 909, 1645, 1835, and 1418 aftershocks, respectively.
Sensitivity analyses repeat the fit over 180, 365, and 730 day observation
windows and at $M_c$, $M_c+0.25$, and $M_c+0.50$.

\subsection{Scalar branching inference}

Daily counts are fitted with the finite-window Hawkes--Volterra model
\[
\lambda_t=\mu+\frac{\rho}{G_L}\sum_{\ell=1}^{\min(L,t)}
 m_\ell Y_{t-\ell},\qquad
m_\ell=(\ell+c)^{-p}e^{-\theta\ell},\qquad
G_L=\sum_{\ell=1}^{L}m_\ell,
\]
with $L=180$ days. The branching ratio $\rho$ is parameterized directly;
therefore it remains the primary stability diagnostic even when the nuisance
shape parameters $(p,c,\theta)$ are weakly identified. We compare the
tempered model with the nested untempered model $\theta=0$ using
$\Delta\mathrm{AIC}=\mathrm{AIC}_{\rm temp}-\mathrm{AIC}_{\rm untemp}$.
Profile likelihood supplies the primary 95\% interval for $\rho$. A
300-replicate parametric bootstrap is also run from the fitted model. We
report its bias-corrected point estimate and basic bootstrap interval only as
a finite-sample sensitivity diagnostic; no nominal coverage claim is made.

\begin{table}[t]
\centering
\caption{Scalar branching diagnostics from the single-source relocated
Chilean cohort. Profile intervals are the primary uncertainty statement.
The basic parametric-bootstrap interval ($B=300$) is a finite-sample
sensitivity diagnostic, not a calibrated nominal-coverage interval.
$\Delta$AIC compares the tempered and untempered kernels; negative values
favor tempering.}
\label{tab:seismic}
\scriptsize
\begin{tabular}{lrrrrrc}
\toprule
sequence & $M_c$ & $N$ & $\widehat\rho$ [profile] &
$\widetilde\rho$ [bootstrap] & $\widehat p$ & $\Delta$AIC \\
\midrule
Tocopilla & 3.3 &  909 & $0.781\,[0.644,0.948]$ & $0.838\,[0.692,1.071]$ & 1.22 & $+1.2$ \\
Maule     & 3.8 & 1645 & $0.787\,[0.738,0.839]$ & $0.792\,[0.714,0.881]$ & 0.80 & $-15.9$ \\
Iquique   & 3.1 & 1835 & $0.652\,[0.611,0.695]$ & $0.660\,[0.598,0.727]$ & 2.28 & $-1.7$ \\
Illapel   & 3.7 & 1418 & $0.722\,[0.648,0.803]$ & $0.745\,[0.653,0.840]$ & 1.72 & $+2.0$ \\
\bottomrule
\end{tabular}
\end{table}

\begin{figure}[t]
\centering
\includegraphics[width=\textwidth]{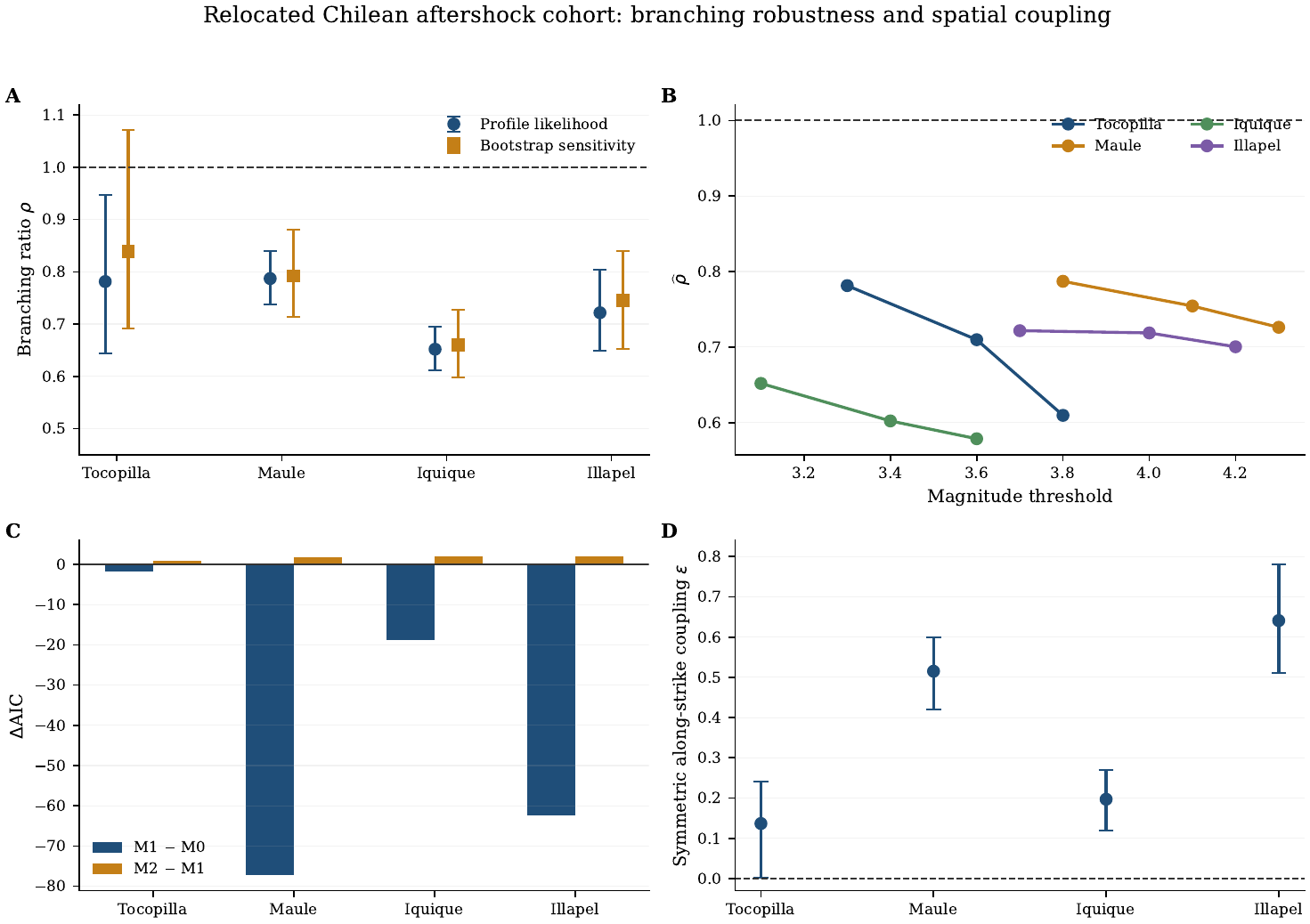}
\caption{Single-source Chilean seismic analysis. \textbf{(A)} Scalar
branching ratio from profile likelihood and from the 300-replicate
parametric-bootstrap sensitivity analysis; the dashed line is the critical
value $\rho=1$. \textbf{(B)} Dependence of $\widehat\rho$ on increasingly
conservative magnitude thresholds for the 365-day window. \textbf{(C)} AIC
ladder for the ordered along-strike spatial operator: M0 has no inter-zone
mixing, M1 has symmetric nearest-neighbor mixing, and M2 permits separate
northward and southward strengths. \textbf{(D)} Estimated symmetric M1
coupling with profile intervals.}
\label{fig:seismic-chile}
\end{figure}

The profile likelihood places all four primary estimates below the critical
boundary. Across the full $(T,M_c)$ sensitivity grid, the fitted branching
ratios remain below one: $0.46$--$0.85$ for Tocopilla, $0.73$--$0.87$ for
Maule, $0.54$--$0.68$ for Iquique, and $0.59$--$0.74$ for Illapel. The
finite-sample bootstrap is more cautious: its basic interval crosses one for
Tocopilla, so that sequence is treated as bootstrap-sensitive rather than as
robustly certified. The other three basic bootstrap intervals remain
subcritical. The nuisance memory shape is not interpreted mechanistically:
several fits approach bounds in $p$, $c$, or $\theta$, and the AIC comparison
shows that only Maule provides strong evidence for a finite tempering term,
with modest evidence for Iquique and essentially equivalent fits for
Tocopilla and Illapel. In particular, no empirical claim of an asymptotically
untempered physical kernel is made.

\subsection{Ordered along-strike spatial operator}

For the spatial stage, hypocenters are projected to local Cartesian
coordinates, the dominant rupture-aligned direction is estimated by principal
components, and events are divided into three tertiles along the oriented
axis. The sign of the axis is fixed to point northward, so zone labels are
reproducible and physically ordered. Figure~\ref{fig:seismic-zones} shows the
resulting zones. This construction replaces arbitrary $k$-means labels by a
one-dimensional along-strike coordinate.

The scalar memory shape is held fixed at its sequence-specific MLE. We then
compare three spatial models whose coupling matrices have Perron radius one. M0 is independent
across zones. M1 moves a total fraction $\varepsilon$ of excitation to the
two adjacent zones symmetrically. M2 replaces this by separate northward and
southward fractions. Boundary leakage is returned to the source zone, so all
three coupling matrices are column stochastic and have Perron root one;
therefore the fitted scalar $\rho$ is also the network branching radius.

\begin{table}[t]
\centering
\caption{Ordered along-strike spatial ladder. The $\varepsilon$ interval is
a one-dimensional profile interval under M1. Negative
$\Delta\mathrm{AIC}_{10}$ favors symmetric spatial mixing over independent
zones; positive $\Delta\mathrm{AIC}_{21}$ means that an additional directional
north--south asymmetry is not supported.}
\label{tab:seismic-spatial}
\small
\begin{tabular}{lrrrr}
\toprule
sequence & $\widehat\rho_{\rm M1}$ & $\widehat\varepsilon$ [profile] &
$\Delta\mathrm{AIC}_{10}$ & $\Delta\mathrm{AIC}_{21}$ \\
\midrule
Tocopilla & 0.655 & $0.137\,[0.001,0.240]$ & $-1.8$  & $+0.8$ \\
Maule     & 0.767 & $0.515\,[0.420,0.600]$ & $-77.1$ & $+1.7$ \\
Iquique   & 0.597 & $0.197\,[0.120,0.270]$ & $-18.9$ & $+1.9$ \\
Illapel   & 0.695 & $0.641\,[0.510,0.780]$ & $-62.5$ & $+2.0$ \\
\bottomrule
\end{tabular}
\end{table}

\begin{figure}[t]
\centering
\includegraphics[width=0.94\textwidth]{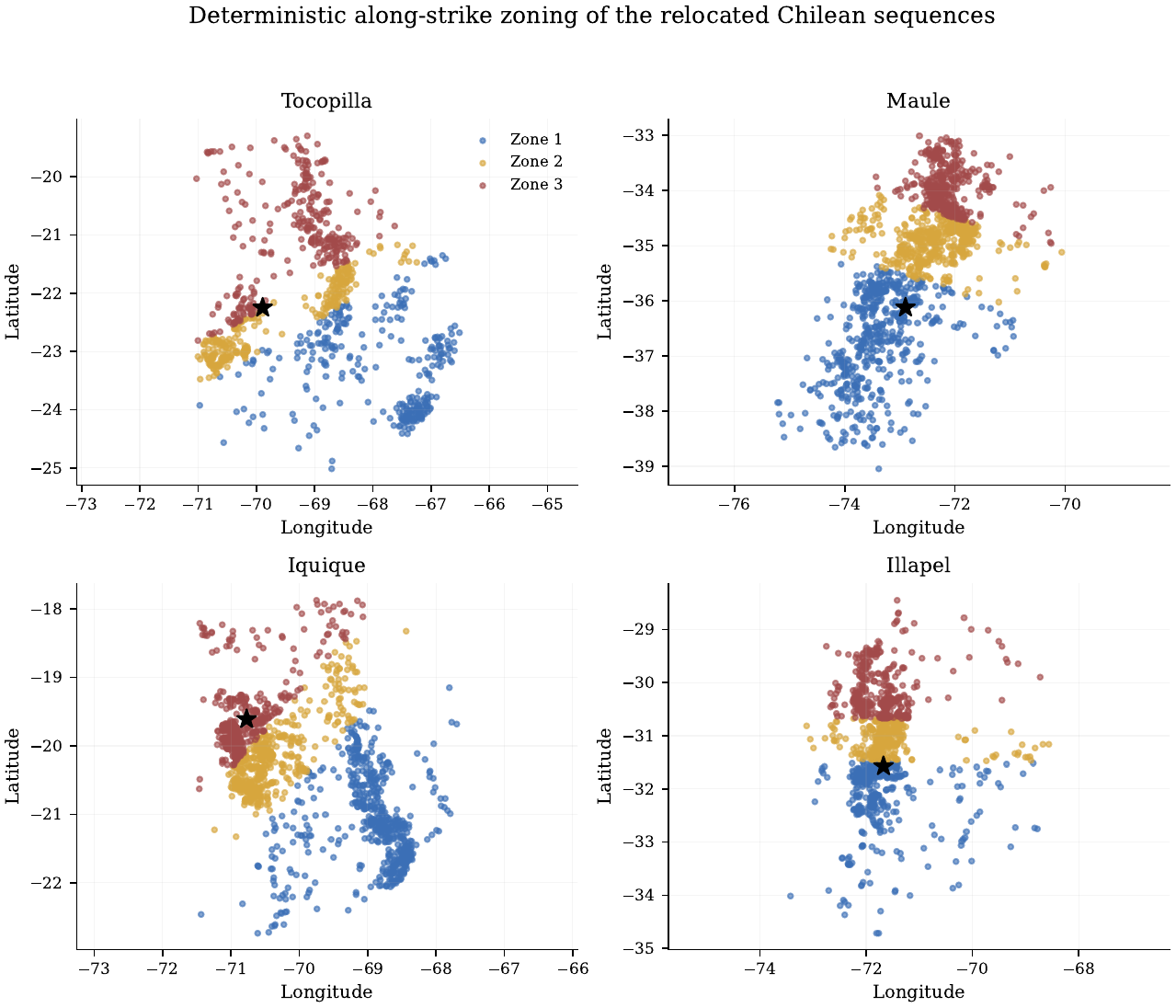}
\caption{Ordered along-strike zones for the four relocated Chilean
sequences. Colors denote tertiles of the principal rupture-aligned
coordinate; stars are the external reviewed mainshock anchors. The zoning is
deterministic and is used only after the sequence-specific completeness
filter.}
\label{fig:seismic-zones}
\end{figure}

The spatial result is more informative than the earlier unrestricted
three-zone experiment. Symmetric inter-zone mixing is strongly supported for
Maule, Iquique, and Illapel, with profile intervals for $\varepsilon$ bounded
away from zero. Tocopilla is only weakly informative: its AIC improvement is
small and the profile reaches the near-zero edge of the grid. In every
sequence, M2 fails to improve AIC despite its extra directional parameter.
Thus the present data support an identifiable low-dimensional spatial
coupling component in three sequences, but not a resolved north--south
asymmetry. This distinction is useful for the operator-learning narrative:
the certified matrix fLNO can identify network coupling when the spatial
parameterization is physically ordered and parsimonious, while refusing an
unsupported directional degree of freedom.

\FloatBarrier
\section{Application II: the renewal equation as a certified layer}
\label{sec:epi}

The epidemic renewal equation is the intensity block of
\eqref{eq:volterra} at daily resolution, and the application inverts two
roles instructively. Normalizing the learned kernel to unit mass makes
the branching diagnostic \emph{literally} the effective reproduction
number, $\rho_k=R_k$ window by window. And since epidemics do cross
criticality, the certificate is not imposed on estimation; its role is
the control one --- margin to criticality per window, and the
intervention strength guaranteeing a prescribed margin. Finally, the
rational block of Definition~\ref{def:flno} earns its keep here:
generation-interval kernels are humped, outside the completely monotone
atom class, and we use the block's minimal nonnegative humped element
$w_\ell=e^{-b\ell}-e^{-(b+d)\ell}$. The two applications thus exercise
complementary stages of the same architecture.

\subsection{Scalar stage: Chile}

Daily national cases from the OWID-hosted historical JHU series \citep{dong2020} (2020-03-15 to 2022-03-01, $717$ days) are
fitted with
$I_t\sim\mathrm{NegBin}(\delta_{\mathrm{dow}(t)}R_{k(t)}
\sum_{\ell\le21}m_\ell I_{t-\ell},\kappa)$ over $24$ four-week windows
($33$ parameters, joint ML; exact one-dimensional conditional profiles
per window), against the Cori estimator \citep{cori2013} with the
literature gamma kernel on the same windows
(Figure~\ref{fig:epi-chile}). Findings: \emph{(i)} the trajectory
validates against the country's known history, with $92\%$
growth/decline classification agreement with the baseline; \emph{(ii)}
learning the kernel is decisively supported,
$\Delta\mathrm{AIC}=-62.7$ against the same machine with the kernel
frozen, the learned effective kernel contracting to mean $3.3$ d ---
reported as an effective reporting-resolution kernel. The critical normalization remains $R=1$ by construction, but the estimated $R_t$ trajectory and hence empirical classification do depend on the learned kernel; \emph{(iii)} the uncertainty comparison is substantial: with $\widehat\kappa\approx16$, the fixed-kernel Cori intervals are substantially narrower than the negative-binomial profile intervals under the present model. Across the 24 windows, the median ratio of profile-interval width to Cori-interval width is $9.4$ (mean $10.8$, range $4.3$--$29.0$). This comparison is descriptive and is not itself a coverage proof; only $8$ of $24$ windows decisively exclude $R=1$.

\begin{figure}[t]
\centering
\includegraphics[width=\textwidth]{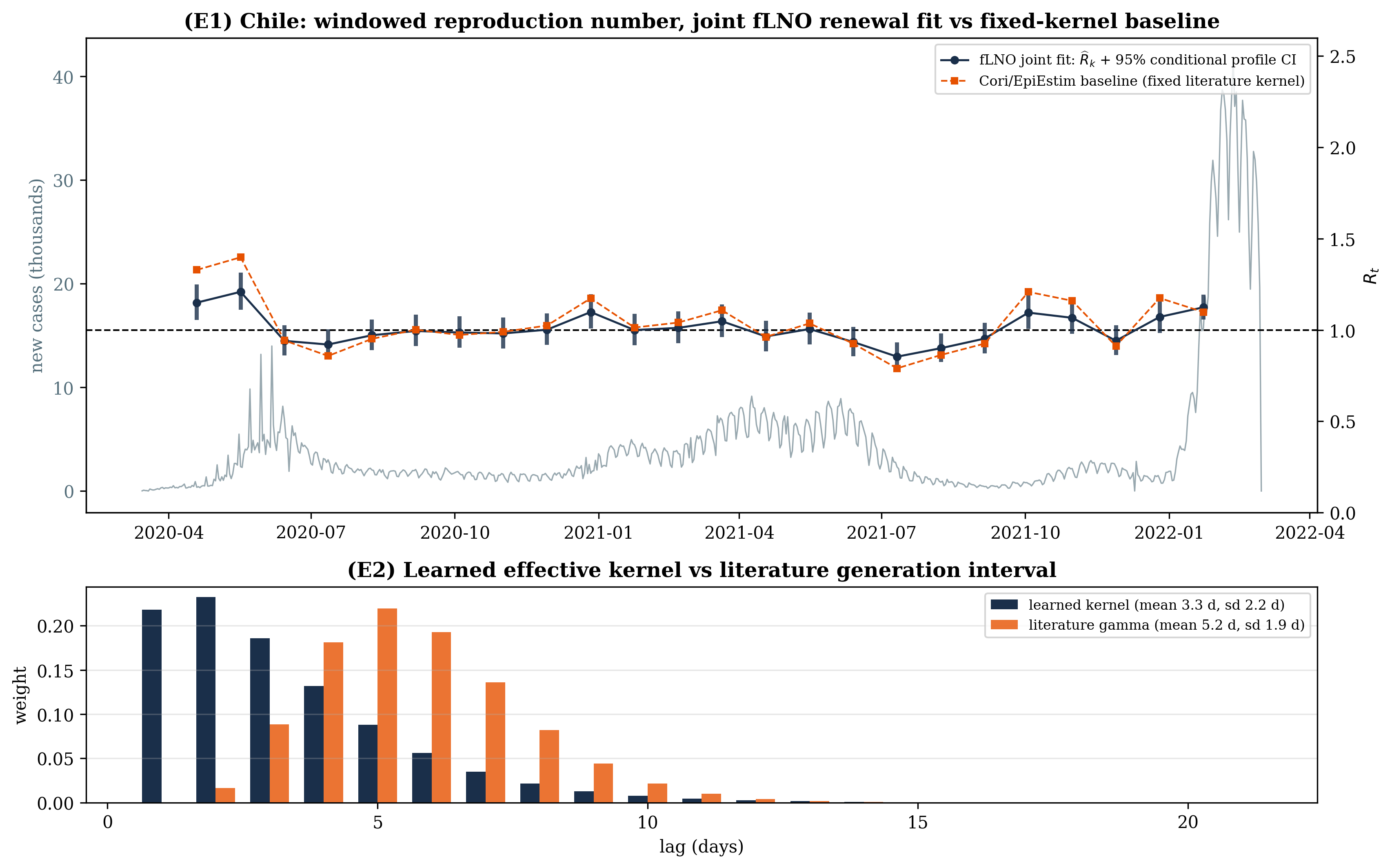}
\caption{Chile, scalar stage. \textbf{(E1)} Windowed $R_t$ from the joint
fit (dark, $95\%$ conditional profiles) against the fixed-kernel Cori
baseline (light), over the incidence backdrop. \textbf{(E2)} Learned
effective kernel against the literature generation interval.}
\label{fig:epi-chile}
\end{figure}

\subsection{Network stage: 21 Italian regions}

Regional daily incidence from the Italian Protezione Civile \citep{dpc2020} (2020-09-01 to 2022-03-01;
$547$ days, $12.4$ million cases, $18$ windows) is fitted with
$I_{i,t}\sim\mathrm{NegBin}(\delta_{\mathrm{dow}(t)}R_{k(t)}
(A\Lambda_t)_i,\kappa)$, the coupling ladder Perron-normalized by
construction ($\rho_{\rm net}(k)=R_k$ exactly): M0 diagonal; M1
destination-scaled mean field $A=(1-\varepsilon)I+\varepsilon
p\mathbf 1^{\!\top}$ with frozen 2020 resident-population weights from ISTAT \citep{istat2021}; M2 gravity
$A\propto(1-\varepsilon)I+\varepsilon\,\mathrm{diag}(p)\widetilde
W(\ell)$, $\widetilde W_{ij}\propto\mathrm{pop}_je^{-d_{ij}/\ell}$. One
modelling point deserves record because it changed the answer: with raw
counts the importation term must scale with destination population; a
plain row-stochastic mixing loses to the mean field.

\begin{table}[h]
\centering
\caption{Italy, 21 regions: the certified coupling ladder.}
\label{tab:italy}
\small
\begin{tabular}{lccll}
\toprule
model & params & AIC & $\Delta$AIC & identified structure \\
\midrule
M0 diagonal   & 27 & 132\,831.9 & ---      & --- \\
M1 mean field & 28 & 132\,701.2 & $-130.7$ &
$\widehat\varepsilon=0.037$, CI $[0.033,0.040]$ \\
M2 gravity    & 29 & 132\,685.0 & $-16.2$  &
$\widehat\ell=403$ km, CI $[234,894]$ \\
\bottomrule
\end{tabular}
\end{table}

\begin{figure}[t]
\centering
\includegraphics[width=\textwidth]{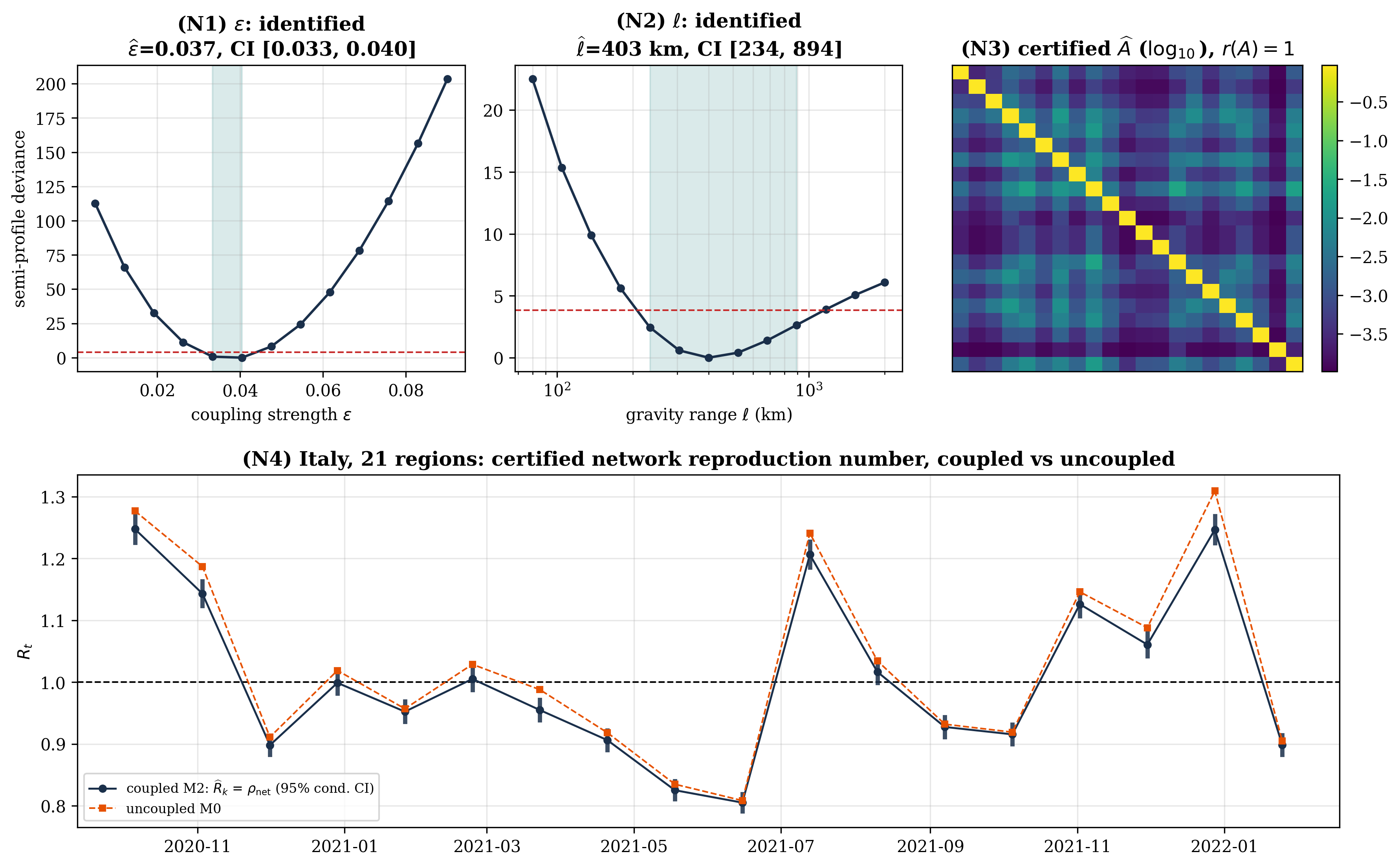}
\caption{Italy, network stage. \textbf{(N1)}--\textbf{(N2)} Semi-profile
deviances for the coupling strength and the gravity range.
\textbf{(N3)} The certified reproduction matrix, $r(\widehat A)=1$ by
construction. \textbf{(N4)} Certified network reproduction number with
$95\%$ conditional intervals, coupled against uncoupled.}
\label{fig:epi-italy}
\end{figure}

Results (Table~\ref{tab:italy}, Figure~\ref{fig:epi-italy}): coupling is
identifiable at $q=21$ with ten-percent relative precision
($\widehat\varepsilon=0.037$, CI $[0.033,0.040]$; deviance $\approx113$
at $\varepsilon\to0$), and the spatial range is identified as well
($\widehat\ell=403$ km, CI $[234,894]$). Under the fitted model, omitting spatial coupling yields larger local reproduction estimates, with the largest differences at wave onsets (Delta: $1.20$ vs $1.24$; Omicron: $1.24$ vs $1.31$). AIC is supplemented by an executed rolling-origin test: fitting windows $1{:}k$ and scoring window $k+1$ for $k=10,\ldots,17$ gives a mean M2--M0 negative-binomial log-score gain of $+0.039$ nats per region-day (median $+0.025$), with M2 improving on M0 in $6$ of $8$ origins. A legacy single split gives a consistent gain of $+0.041$ nats per region-day. These predictive checks support information in the low-dimensional coupling, but neither AIC nor forecasting gain is interpreted as causal evidence for physical importation.

\begin{figure}[t]
\centering
\includegraphics[width=0.90\textwidth]{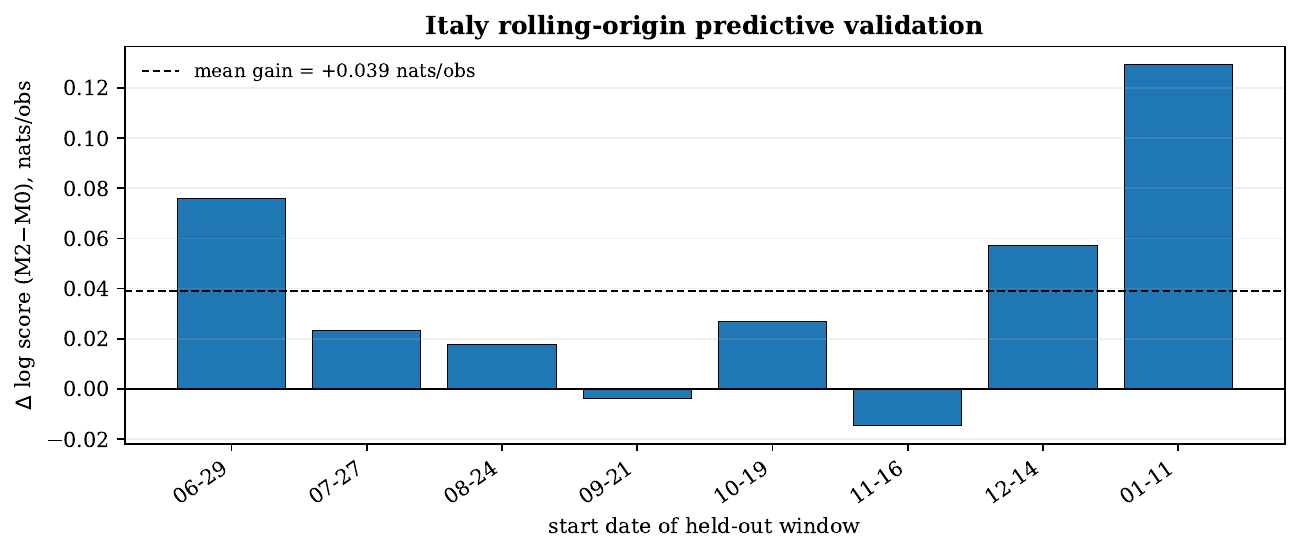}
\caption{Italy rolling-origin predictive validation. At each origin, M0 and M2 are refitted using all preceding 28-day windows and the next window is scored by the negative-binomial log score under one-step $R$ persistence. Positive values favor the gravity-coupled M2 model. M2 improves on M0 in six of eight origins; the dashed line marks the mean gain of $+0.039$ nats per region-day.}
\label{fig:epi-italy-rolling}
\end{figure}

\subsection{The structural contrast}

The two applications state the identifiability message empirically: the
same certified estimator, run honestly at two information scales, gives
at $q=3$ with $\sim6\times10^3$ events an identified branching radius
with a partially flat coupling direction reported as such, and at $q=21$
with $\sim1.2\times10^7$ cases, coupling strength and spatial range
identified to ten percent. The certified parametrization is what makes
the runs comparable: in both, the stability object is pinned to
$r(A)=1$ by construction rather than estimated along an unstable
direction.

\section{Discussion and limitations}
\label{sec:discussion}

The main mathematical conclusion is narrower and stronger than a generic
claim that neural operators can perform stability analysis. The fLNO
contains the exact fractional Volterra resolvent on the commuting class,
and its trainable gains can be restricted to the exact known stability
region. This complements equation-free neural-operator stability methods
\citep{fabiani2026}: those methods interrogate a learned time-stepper,
whereas the present certificate is an architectural property tied to a
specific hereditary branching threshold.

The rational expressivity result also needs careful interpretation. On a
fixed frequency window, positive rational approximants converge
geometrically and can be extremely accurate. The obstruction concerns
exact fractional branch singularities and horizon-uniform critical
asymptotics, not ordinary finite-window interpolation. Moreover, a general
signed rational transfer function can have a multiple critical zero and
therefore polynomial terms; the stronger positive-neutral-limit statement
holds only for the positive Stieltjes rational class.

The transfer theorem is an operator statement. Empirical accuracy on a
finite collection of training trajectories is not an operator norm. The
benchmark therefore separates what is demonstrated empirically from what is
proved analytically. On the common-data task, a positive rational
pole--residue operator can match or exceed fLNO accuracy over a finite
window, and at $\rho=0.97$ it even has the smaller doubled-horizon tail
error. This is not a failure of the expressivity theorem: the obstruction is
the exact fractional branch singularity and horizon-uniform asymptotics,
not finite-window approximation. Conversely, the fLNO identifies the
branching coordinate more faithfully and does so with three parameters.
The FNO-1D and DeepONet comparisons are benchmark-specific rather than a
general ranking. The main benchmark reports two independent seeds, so its
small error differences should be interpreted descriptively rather than as
population-level performance claims. The rational/SSM baselines are canonical
pole--residue/state-space realizations rather than official LNO or S4 code.
The graph experiment tests transfer of size-independent physical parameters
composed with the deployment graph's known spectral structure; it does not
estimate the operator norm in Theorem~\ref{thm:transfer}.

The seismic analysis is intentionally conservative. All four sequences are
now derived from one relocated CSN catalog \citep{potin2024}. The
reproducibility package freezes the source DOI and SHA-256 identifier, the
retrieval/extraction code, mainshock anchors, completeness diagnostics, and
fitted outputs; the event-level source table itself is not republished because
the current source record does not display an explicit redistribution license. Magnitude is
used only as a sequence-specific occurrence threshold because the source
catalog contains several magnitude types. The 300-replicate parametric
bootstrap is a finite-sample sensitivity analysis, not a validated 95\%
calibration; an outer simulation study would be required before making
nominal-coverage claims. The catalog of \citet{candia2026} remains a useful
independent route for future sensitivity analyses, but any Hawkes analysis
must use a duplicate-resolved, non-declustered event table because
declustering removes the self-excitation signal.

The epidemic results are likewise associational. Learned generation or
reporting kernels alter the estimated $R_t$ trajectory even though the
critical normalization is $R=1$. The Italian spatial coupling matrix may
absorb common national forcing, reporting synchronization, or policy
effects in addition to physical importation; no causal mobility
interpretation is made without mobility covariates.

Finally, exactness currently concerns linear excitation. Nonlinear or
saturating hereditary operators are the natural setting in which deeper
nonlinear fLNO compositions would provide approximation power beyond the
exact linear resolvent. The seismic and epidemic studies use the certified
memory/coupling architecture for structured statistical estimation; they are
not end-to-end black-box neural-operator benchmarks and should not be
interpreted as such.

\paragraph{Data and code availability.}
Code, frozen numerical outputs, analysis scripts, provenance records, and
reproducibility documentation are publicly available at
\href{https://github.com/mauricio-herrera/fractional-laplace-neural-operators}{project GitHub repository}.
The seismic application uses the relocated Chilean CSN catalog of
\citet{potin2024}, archived at
\href{https://doi.org/10.5281/zenodo.13146436}{Zenodo 10.5281/zenodo.13146436}. Because the current source
record does not display an explicit redistribution license, the event-level
catalog is not republished; the repository instead provides an automated
retrieval script and verifies the exact source by SHA-256 before reconstruction.
The epidemic analyses use the historical JHU/OWID Chile series and the official
Italian Protezione Civile regional series under their original data terms, with
frozen provenance and computational population weights. The repository contains
the primary common-data benchmark, near-critical sweep, nominal and scarce/noisy
stability audits, graph-size transfer experiment, seismic reconstruction, and
Chile/Italy epidemic workflows including the executed eight-origin rolling
validation. The previously published Volterra foundation and its archived
reproducibility package are available at
\href{https://doi.org/10.5281/zenodo.21779954}{Zenodo 10.5281/zenodo.21779954}. The version-specific v1.0.0 archive of the present repository is available at
\href{https://doi.org/10.5281/zenodo.22837154}{Zenodo 10.5281/zenodo.22837154}.

\appendix

\section{Proof of Theorem \ref{thm:rates}}
\label{app:rates}

Throughout, $c_\alpha=\sin(\pi\alpha)/\pi$.

\begin{lemma}[Bernstein representation]
\label{lem:bern}
For $\xi>-\theta$,
$(\xi+\theta)^{\alpha-1}=c_\alpha\int_\theta^\infty
(r-\theta)^{\alpha-1}(\xi+r)^{-1}dr$.
\end{lemma}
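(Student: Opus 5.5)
The plan is to derive the identity from the classical beta-integral representation of a fractional power as a Stieltjes transform, then shift variables. First I would establish, for $s>0$ and $0<\alpha<1$,
\[
s^{\alpha-1}=c_\alpha\int_0^\infty \frac{u^{\alpha-1}}{s+u}\,du .
\]
This follows by scaling: the substitution $u=sv$ gives $s^{\alpha-1}\int_0^\infty v^{\alpha-1}(1+v)^{-1}\,dv$. The remaining integral is the Beta value $B(\alpha,1-\alpha)=\Gamma(\alpha)\Gamma(1-\alpha)=\pi/\sin(\pi\alpha)$, by Euler's reflection formula. Its reciprocal is exactly $c_\alpha$. Convergence is immediate: near $v=0$ the integrand behaves like $v^{\alpha-1}$, integrable since $\alpha>0$, and at infinity it behaves like $v^{\alpha-2}$, integrable since $\alpha<1$.

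Next I set $s=\xi+\theta$, which is positive precisely when $\xi>-\theta$, and substitute $r=u+\theta$, so that $u\in(0,\infty)$ corresponds to $r\in(\theta,\infty)$. Then $s+u=\xi+r$ and $u^{\alpha-1}=(r-\theta)^{\alpha-1}$, which gives the stated formula directly. The same convergence check covers the shifted integral, since $\xi+r\ge\xi+\theta>0$ on the whole range of integration.

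There is no real obstacle here. The only point needing care is identifying the normalizing constant through the reflection formula and keeping the domain condition $\xi>-\theta$, which guarantees the denominator never vanishes. I would also record the structural consequence used later in the appendix: the density $c_\alpha(r-\theta)^{\alpha-1}$ is positive on $(\theta,\infty)$. Consequently any Gauss-type quadrature of this integral, taken in the variable $r$ or in a transformed variable such as $u=\theta/r$ mapping to $[0,1]$ with a Jacobi weight, automatically produces positive residues and poles in $(\theta,\infty)$. This is exactly what part (i) of Theorem~\ref{thm:rates} requires.
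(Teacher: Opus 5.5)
Your proof is correct and matches the paper's argument: the scaling substitution reduces the integral to $B(\alpha,1-\alpha)=\pi/\sin(\pi\alpha)=1/c_\alpha$, after which you set $s=\xi+\theta$ and shift $r=u+\theta$. One small caution about your closing remark, which the lemma itself does not need: a Gauss rule taken directly in $r$ is not available, because $c_\alpha(r-\theta)^{\alpha-1}\,dr$ has infinite mass on $(\theta,\infty)$; this is why the paper substitutes $v=\theta/r$, which turns the weight into a finite Jacobi measure of mass $\theta^\alpha$.
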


\begin{proof}
For $x>0$, $a\in(0,1)$, substituting $s=xt$:
$\int_0^\infty s^{a-1}(s+x)^{-1}ds=x^{a-1}B(a,1-a)
=\pi x^{a-1}/\sin\pi a$. Take $a=\alpha$, $x=\xi+\theta$, $s=r-\theta$.
\end{proof}

\begin{lemma}[Jacobi representation]
\label{lem:jacobi}
For $\xi\ge0$,
\[
\ghat(\xi)=\int_0^1\frac{d\sigma(v)}{\theta+\xi v},\qquad
d\sigma(v)=c_\alpha\theta^\alpha v^{-\alpha}(1-v)^{\alpha-1}dv,\qquad
\sigma\big((0,1)\big)=\theta^\alpha .
\]
\end{lemma}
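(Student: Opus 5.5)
The plan is to start from the Bernstein representation of Lemma~\ref{lem:bern} and change variables so that the pole location \(r\in(\theta,\infty)\) becomes a bounded parameter \(v\in(0,1)\). Set \(r=\theta/v\). Then \(r-\theta=\theta(1-v)/v\) and \(dr=\theta v^{-2}\,dv\). Since \(r\) runs over \((\theta,\infty)\) exactly when \(v\) runs over \((0,1)\), the integration domain is exactly \((0,1)\) with orientation reversed, which absorbs the sign of \(dr\). The kernel transforms as
\[
(\xi+r)^{-1}=\frac{v}{\xi v+\theta}.
\]
Collecting powers gives
\[
(r-\theta)^{\alpha-1}\,dr
=\theta^{\alpha-1}(1-v)^{\alpha-1}v^{1-\alpha}\cdot\theta v^{-2}\,dv
=\theta^{\alpha}(1-v)^{\alpha-1}v^{-1-\alpha}\,dv .
\]
Multiplying by \(v/(\theta+\xi v)\) leaves \(\theta^\alpha v^{-\alpha}(1-v)^{\alpha-1}(\theta+\xi v)^{-1}\,dv\). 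Restoring the constant \(c_\alpha\) from Lemma~\ref{lem:bern} yields exactly the stated density \(d\sigma\). Because \(\xi\ge 0>-\theta\), Lemma~\ref{lem:bern} applies on the whole stated range, and every integrand is nonnegative, so the substitution is justified by Tonelli/monotone convergence with no convergence subtleties.

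For the total mass, I would compute
\[
\sigma\big((0,1)\big)=c_\alpha\theta^\alpha\int_0^1 v^{-\alpha}(1-v)^{\alpha-1}\,dv
=c_\alpha\theta^\alpha\,B(1-\alpha,\alpha).
\]
Euler's reflection formula gives \(B(1-\alpha,\alpha)=\Gamma(1-\alpha)\Gamma(\alpha)=\pi/\sin(\pi\alpha)\), so \(c_\alpha B(1-\alpha,\alpha)=1\) and the mass is \(\theta^\alpha\).

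As a consistency check, evaluating the representation at \(\xi=0\) gives \(\sigma((0,1))/\theta=\theta^{\alpha-1}=G(\theta,\alpha)\). This is the correct zero-frequency mass, so the constant cannot be off.

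There is no real obstacle here. The only place an error could creep in is the bookkeeping of the exponents of \(v\) in the Jacobian, which is why I would verify the result against the \(\xi=0\) value as above. The purpose of the lemma is to present \(\ghat\) as a Markov–Stieltjes function of a Jacobi-weight measure on \([0,1]\) with parameters \((-\alpha,\alpha-1)\). This prepares the Gauss–Jacobi quadrature step in part (i) of Theorem~\ref{thm:rates}: each quadrature node \(v_k\) produces a pole \(\theta/v_k\in(\theta,\infty)\) with a positive weight.
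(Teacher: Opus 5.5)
Your proof is correct and follows the paper's argument essentially line by line. Both substitute $r=\theta/v$ into Lemma~\ref{lem:bern}, collect the powers of $v$ to obtain the density, get the mass from $c_\alpha B(1-\alpha,\alpha)=1$, and check that $\xi=0$ returns $\theta^{\alpha-1}$.
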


\begin{proof}
Substitute $r=\theta/v$ in Lemma~\ref{lem:bern}:
$dr=-\theta v^{-2}dv$,
$(r-\theta)^{\alpha-1}=\theta^{\alpha-1}(1-v)^{\alpha-1}v^{1-\alpha}$,
$(\xi+r)^{-1}=v/(\theta+\xi v)$; collecting powers of $v$ gives the
identity, and the mass follows from
$B(1-\alpha,\alpha)=1/c_\alpha$; at $\xi=0$ the formula returns
$\theta^{\alpha-1}$ as it must. The identity was additionally certified
numerically to twelve digits.
\end{proof}

The weight is a Jacobi weight with both exponents in $(-1,0)$, so
$n$-point Gauss quadrature for $\sigma$ exists with nodes in $(0,1)$ and
positive weights, exact on polynomials of degree $2n-1$
\citep{gautschi}.

\begin{lemma}[Gauss error for analytic integrands]
\label{lem:gauss}
Let $\mu$ be a finite positive measure on $[a,b]$ and $f$ analytic on the
closed Bernstein ellipse $\overline{E_\rho}$ of $[a,b]$ with $|f|\le M$,
$\rho>1$. Then the $n$-point Gauss rule satisfies
$|\int f\,d\mu-G_n[f]|\le 4M\mu([a,b])\rho^{-2n}/(1-\rho^{-1})$.
\end{lemma}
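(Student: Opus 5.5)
The plan is the classical route: Chebyshev expansion, positivity of Gauss weights, and exactness on polynomials of degree \(2n-1\). Translate \([a,b]\) to \([-1,1]\) by the affine map \(x=\tfrac{a+b}{2}+\tfrac{b-a}{2}s\). This map carries Bernstein ellipses to Bernstein ellipses with the same parameter \(\rho\), so we may assume \([a,b]=[-1,1]\) throughout.

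First, expand \(f=\sum_{k\ge0}c_kT_k\) in Chebyshev polynomials. The standard estimate for \(f\) analytic and bounded by \(M\) on \(\overline{E_\rho}\) comes from the contour-integral formula for \(c_k\) after the Joukowski substitution \(s=(z+z^{-1})/2\), shifting the circle \(|z|=1\) to \(|z|=\rho\). It gives \(|c_k|\le 2M\rho^{-k}\) for \(k\ge1\), and \(|c_0|\le M\).

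Second, write \(f=p_{2n-1}+R\), where \(p_{2n-1}=\sum_{k\le 2n-1}c_kT_k\) is the truncation and \(R\) the tail. Gauss quadrature is exact on \(p_{2n-1}\), so
\[
\int f\,d\mu-G_n[f]=\int R\,d\mu-G_n[R].
\]
Since \(|T_k|\le1\) on \([-1,1]\), we have \(\|R\|_\infty\le\sum_{k\ge 2n}2M\rho^{-k}=2M\rho^{-2n}/(1-\rho^{-1})\). Bounding the first term then gives \(|\int R\,d\mu|\le\mu([a,b])\|R\|_\infty\).

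Third, handle the quadrature term using positivity. The Gauss weights \(w_i\) are positive and sum to \(\mu([a,b])\), because the rule integrates the constant \(1\) exactly. The nodes lie in \([a,b]\), so \(|G_n[R]|\le\sum_i w_i|R(x_i)|\le\mu([a,b])\|R\|_\infty\). Adding the two contributions gives \(2\mu([a,b])\cdot 2M\rho^{-2n}/(1-\rho^{-1})\), which is exactly the stated constant \(4M\mu([a,b])\rho^{-2n}/(1-\rho^{-1})\).

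The only step needing care is the coefficient bound \(|c_k|\le 2M\rho^{-k}\). One subtlety is that the stated hypothesis places \(f\) on the \emph{closed} ellipse. This permits the contour to be placed exactly at radius \(\rho\), rather than at \(\rho-\epsilon\) followed by a limit. I would cite the standard reference (Trefethen's theorem on Chebyshev coefficients) rather than rederive it. Everything else is routine.
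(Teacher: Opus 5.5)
Your proposal is correct and follows essentially the same route as the paper. Both use the Chebyshev coefficient bound $|c_k|\le 2M\rho^{-k}$ (Trefethen, Theorem~8.1), Gauss exactness through degree $2n-1$, and positivity of the weights together with $\|T_k\|_\infty\le1$; the only cosmetic difference is that you first sum the tail into a sup-norm bound on $R$, whereas the paper bounds each $T_k$ term separately, and both give the constant $4M\mu([a,b])\rho^{-2n}/(1-\rho^{-1})$.
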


\begin{proof}
Map to $[-1,1]$ and expand in Chebyshev polynomials,
$f=\sum'a_kT_k$ with $|a_k|\le2M\rho^{-k}$
\citep[Thm.~8.1]{trefethen}. Gauss exactness on degree $2n-1$ removes the
first $2n$ terms; for each remaining $k$, $|\int T_kd\mu|\le\mu$ and
$|G_n[T_k]|\le\mu$ by positivity of the weights and
$\|T_k\|_\infty\le1$. Summing the tail gives the bound.
\end{proof}

\subsection*{Part (i): compact windows}

Let $(v_j,w_j)_{j\le N}$ be the Gauss rule for $\sigma$ and
$S_N(\xi)=\sum_jw_j/(\theta+\xi v_j)=\sum_j\beta_j/(\xi+r_j)$ with
$r_j=\theta/v_j>\theta$, $\beta_j=w_j/v_j>0$, so
$S_N\in\mathcal S_N^{+}$. Fix $\xi\in[0,\Xi]$ and apply
Lemma~\ref{lem:gauss} to $f_\xi(v)=(\theta+\xi v)^{-1}$ with
$\rho'=1+\tau+\sqrt{\tau(2+\tau)}$, whose leftmost real point in the
$[-1,1]$ coordinate is $-(1+\tau)$, i.e.\ $v=-\tau/2$. The only
singularity of $f_\xi$ is $v^\ast=-\theta/\xi\le-\tau$, outside
$\overline{E_{\rho'}}$; on the ellipse,
$\Re(\theta+\xi v)\ge\theta-\xi\tau/2\ge\theta/2$ uniformly in
$\xi\in[0,\Xi]$, so $M\le2/\theta$; with mass $\theta^\alpha$ the bound
of part (i) follows. Numerically the construction tracks the sharp
parameter $\rho_0$ per pole to three--four digits across windows
$\Xi\in\{20,100,500\}$.

\subsection*{Part (ii): half-line}

\emph{Reduction.} $u=\theta/(\xi+\theta)$ is a bijection
$[0,\infty]\to[0,1]$ with
$\ghat(\xi)=\theta^{\alpha-1}u^{1-\alpha}$; a M\"obius substitution maps
degree-$N$ rationals bijectively to degree-$N$ rationals with equal sup
norms (competitors with a $u$-pole in $[0,1]$ have infinite error and are
never optimal; both $\ghat$ and any bounded competitor extend
continuously to the endpoint). Hence
$E_N^{\rm free}([0,\infty))=\theta^{\alpha-1}E_N(u^{1-\alpha};[0,1])$,
and Stahl's asymptotics \citep{stahl2003} give the lower bound for all
rationals, a fortiori for $\mathcal S_N^{+}$.

\emph{Dyadic positive construction.} In the shifted variable
$s=r-\theta$, $\ghat(\xi)=c_\alpha\int_0^\infty
s^{\alpha-1}\varphi_\xi(s)\,ds$, $\varphi_\xi(s)=(\xi+\theta+s)^{-1}$.
Fix $n\ge2$, $m\ge1$, $R=\theta2^m$; decompose into $J_0=(0,\theta]$,
$J_j=(\theta2^{j-1},\theta2^j]$, and the tail. The tail is at most
$c_\alpha\theta^{\alpha-1}2^{-(1-\alpha)m}/(1-\alpha)$, uniformly in
$\xi\ge0$. On $J_0$, the $n$-point Gauss rule of
$c_\alpha s^{\alpha-1}ds$ applied to $\varphi_\xi$: the singularity
$-(\xi+\theta)\le-\theta$ lies outside the ellipse $E_3$ of $[0,\theta]$
(leftmost point $-\theta/3$), $\Re(\xi+\theta+s)\ge\theta/2$ there, and
Lemma~\ref{lem:gauss} gives error
$\le(12c_\alpha/\alpha)\theta^{\alpha-1}3^{-2n}$, with positive residues
and poles in $(\theta,2\theta)$. On $J_j$, $n$-point Gauss--Legendre
applied to the folded integrand $c_\alpha s^{\alpha-1}\varphi_\xi$: on
$\overline{E_3}$ of $[a,2a]$, $a=\theta2^{j-1}$, the leftmost point is
$2a/3>0$, so $|s|^{\alpha-1}\le(2a/3)^{\alpha-1}$ and
$|\xi+\theta+s|\ge2a/3$; per-block error
$\le6c_\alpha(3/2)^{2-\alpha}a^{\alpha-1}3^{-2n}$, and
$\sum_{j\ge1}a_j^{\alpha-1}=\theta^{\alpha-1}/(1-2^{\alpha-1})$ keeps the
total at $C_\alpha\theta^{\alpha-1}3^{-2n}$ uniformly in $m$. With
$N=(m+1)n$ and
$m=\lceil\sqrt{2(\log3)N/((1-\alpha)\log2)}\rceil$ the two exponents
balance and $E_N^{+}\le C_Be^{-c_1\sqrt N}$,
$c_1=\sqrt{2(\log3)(\log2)(1-\alpha)}$. The implemented construction
measures $e^{-0.89\sqrt N}$ at $\alpha=0.7$, inside the theoretical
envelope. \qed

\section{Remaining proofs}
\label{app:aux}

\subsection*{Theorem \ref{thm:cert}}
For the signed modal parametrization,
\(a_jG^{\rm tot}_j\le(1-m)c_\lambda\), hence
\(c_\lambda-a_jG^{\rm tot}_j\ge mc_\lambda\). If \(a_j<0\), the
Stieltjes positive-real property gives
\(\Re[\xi+c_\lambda-a_j\Psi_j(\xi)]\ge c_\lambda\) on the closed right
half-plane. If \(a_j\ge0\), the standard modulus argument gives zero-free
right-half-plane characteristic functions under the strict mass bound.
The map \(w_j\mapsto (1-m)c_\lambda/G^{\rm tot}_j-\operatorname{softplus}(w_j)\)
has range \(( -\infty,(1-m)c_\lambda/G^{\rm tot}_j )\), so no artificial
lower bound is placed on inhibitory gains.

For the matrix parametrization,
\(r(A)G^{\rm tot}=q(1-m)c_\lambda<(1-m)c_\lambda\), and the
Perron--Frobenius theorem of \citet{herrera2026cnsns} yields stability of
the full block-triangular system. For a target nonnegative matrix with the
same support and strict margin, choose its shape as \(\widetilde A\) and
choose \(q\) to reproduce its spectral radius; zeros are represented by
the fixed support mask. BDF2 convolution quadrature evaluates the same zero-free characteristic
function at $\delta(\zeta)/\Delta t$. For the BDF2 generating polynomial,
$\Re\delta(\zeta)\ge0$ on $|\zeta|\le1$; under the stated Stieltjes and
strict subcriticality assumptions, the discrete symbol therefore remains in
the right half-plane where the continuous characteristic function is zero-free.
This preserves the certificate for the BDF2-CQ recurrence.
The positive-real property of the rational correction follows analytically
from \(\beta_{jk}\ge0,r_{jk}>0\); sampling a frequency grid is only a
numerical diagnostic. \qed

\subsection*{Theorem \ref{thm:transfer} and Corollary \ref{cor:uniform}}
For the learned parameter $\widehat\vartheta$, insert the two common
limit operators and use the triangle inequality:
\[
\begin{aligned}
\|\mathcal N_m^{\widehat\vartheta}-\mathcal S_m\|_{\mathcal B,T}
&\le
\|\mathcal N_m^{\widehat\vartheta}-\mathcal N_W^{\widehat\vartheta}\|_{\mathcal B,T}
+\|\mathcal N_W^{\widehat\vartheta}-\mathcal S_W\|_{\mathcal B,T}
+\|\mathcal S_W-\mathcal S_m\|_{\mathcal B,T}. 
\end{aligned}
\]
The middle term is bounded by routing through size $n$:
\[
\|\mathcal N_W^{\widehat\vartheta}-\mathcal S_W\|_{\mathcal B,T}
\le C_N\eta_n+\varepsilon_{\rm op}+C_S\eta_n.
\]
Combining the two displays gives the stated bound. If $\eta_k$ measures
coefficient-level rather than solution-operator convergence, subtraction of
the mild equations followed by the standard finite-horizon
Volterra--Gronwall estimate supplies the factor $C_0e^{C_1T}$.
For the time-uniform statement, the damped feedback kernel is
$K(t)=\|A\|(e^{-\omega\cdot}*g)(t)$ and
\[
\|K\|_{L^1}=\frac{\|A\|G}{\omega}\le1-m.
\]
The corresponding Neumann resolvent has norm at most $1/m$; applying the
same triangle argument in the time-uniform norm gives the corollary. \qed


\begin{thebibliography}{99}
\itemsep1pt

\bibitem[Dipartimento della Protezione Civile(2020)]{dpc2020}
Dipartimento della Protezione Civile.
COVID-19 Italia -- Monitoraggio situazione: dati regionali.
Official open-data repository, CC BY 4.0, 2020--2023.
\url{https://github.com/pcm-dpc/COVID-19}.

\bibitem[Dong et~al.(2020)]{dong2020}
E.~Dong, H.~Du, and L.~Gardner.
An interactive web-based dashboard to track COVID-19 in real time.
\emph{Lancet Infect. Dis.}, 20(5):533--534, 2020.
\url{https://doi.org/10.1016/S1473-3099(20)30120-1}.

\bibitem[ISTAT(2021)]{istat2021}
Istituto Nazionale di Statistica (ISTAT).
Censimento della popolazione e dinamica demografica -- anno 2020.
Rome, 2021.
\href{https://www.istat.it/it/files/2021/12/CENSIMENTO-E-DINAMICA-DEMOGRAFICA-2020.pdf}{ISTAT 2020 census documentation}.

\bibitem[Abi Jaber and El Euch(2019)]{abijaber2019}
E.~Abi Jaber and O.~El Euch.
Multifactor approximation of rough volatility models.
\emph{SIAM J. Financial Math.}, 10(2):309--349, 2019.
\url{https://doi.org/10.1137/18M1170236}.


\bibitem[Braess and Hackbusch(2005)]{braess2005}
D.~Braess and W.~Hackbusch.
Approximation of $1/x$ by exponential sums in $[1,\infty)$.
\emph{IMA J. Numer. Anal.}, 25:685--697, 2005.

\bibitem[Buitrago Ruiz et~al.(2025)]{buitrago2025memno}
R.~Buitrago Ruiz, T.~Marwah, A.~Gu, and A.~Risteski.
On the benefits of memory for modeling time-dependent PDEs.
\emph{International Conference on Learning Representations (ICLR)}, 2025.
\url{https://openreview.net/forum?id=o9kqa5K3tB}.

\bibitem[Cao et~al.(2024)]{cao2024lno}
Q.~Cao, S.~Goswami, and G.~E. Karniadakis.
Laplace neural operator for solving differential equations.
\emph{Nature Machine Intelligence}, 6:631--640, 2024.
\url{https://doi.org/10.1038/s42256-024-00844-4}.

\bibitem[Cheng et~al.(2026)]{cheng2026mamba}
C.-W.~Cheng, J.~Huang, Y.~Zhang, G.~Yang, C.-B.~Sch\"onlieb, and A.~I.~Aviles-Rivero.
Mamba neural operator: Who wins? Transformers vs. state-space models for PDEs.
\emph{J. Comput. Phys.}, 548:114567, 2026.
\url{https://doi.org/10.1016/j.jcp.2025.114567}.

\bibitem[Cori et~al.(2013)]{cori2013}
A.~Cori, N.~M. Ferguson, C.~Fraser, and S.~Cauchemez.
A new framework and software to estimate time-varying reproduction
numbers during epidemics.
\emph{Am. J. Epidemiol.}, 178(9):1505--1512, 2013.

\bibitem[Gautschi(2004)]{gautschi}
W.~Gautschi.
\emph{Orthogonal Polynomials: Computation and Approximation}.
Oxford University Press, 2004.

\bibitem[Gu et~al.(2022)]{gu2022s4}
A.~Gu, K.~Goel, and C.~R\'e.
Efficiently modeling long sequences with structured state spaces.
\emph{ICLR}, 2022.

\bibitem[Herrera-Mar\'in(2026)]{herrera2026cnsns}
M.~Herrera-Mar\'in.
Cascade dynamics in memory-driven network systems: fractional Volterra
operators, spectral stability criteria, graphon limits, and Hawkes
process connections.
\emph{Communications in Nonlinear Science and Numerical Simulation},
163:110837, 2026.
\url{https://doi.org/10.1016/j.cnsns.2026.110837}.

\bibitem[Kovachki et~al.(2023)]{kovachki2023}
N.~Kovachki, Z.~Li, B.~Liu, K.~Azizzadenesheli, K.~Bhattacharya,
A.~Stuart, and A.~Anandkumar.
Neural operator: learning maps between function spaces with applications
to PDEs.
\emph{J. Mach. Learn. Res.}, 24(89):1--97, 2023.

\bibitem[Kim et~al.(2026)]{kim2026pilno}
H.~Kim, Q.~Cao, H.~Shin, S.~Lee, G.~E.~Karniadakis, and M.~Choi.
Physics-informed Laplace neural operator for solving partial differential equations.
\emph{J. Comput. Phys.}, 566:115291, 2026.
\url{https://doi.org/10.1016/j.jcp.2026.115291}.

\bibitem[Li et~al.(2021a)]{li2021fno}
Z.~Li, N.~Kovachki, K.~Azizzadenesheli, B.~Liu, K.~Bhattacharya,
A.~Stuart, and A.~Anandkumar.
Fourier neural operator for parametric partial differential equations.
\emph{ICLR}, 2021.

\bibitem[Li et~al.(2024)]{li2024pino}
Z.~Li, H.~Zheng, N.~Kovachki, D.~Jin, H.~Chen, B.~Liu,
K.~Azizzadenesheli, and A.~Anandkumar.
Physics-informed neural operator for learning partial differential equations.
\emph{ACM/IMS Journal of Data Science}, 1(3):1--27, 2024.
\url{https://doi.org/10.1145/3648506}.

\bibitem[Potin et~al.(2025)]{potin2024}
B.~Potin, S.~Ruiz, F.~Aden-Antoniow, R.~Madariaga, and S.~Barrientos.
A revised Chilean seismic catalog from 1982 to mid-2020.
\emph{Seismological Research Letters}, 96(1):484--498, 2025.
\url{https://doi.org/10.1785/0220240047}. Dataset:
\url{https://doi.org/10.5281/zenodo.13146436}.

\bibitem[Candia and Macedo(2026)]{candia2026}
G.~Candia and J.~Macedo.
A homogenized and declustered earthquake catalog for continental Chile.
\emph{Seismological Research Letters}, 2026.
\url{https://doi.org/10.1785/0220250449}.

\bibitem[Fabiani et~al.(2026)]{fabiani2026}
G.~Fabiani, H.~Vandecasteele, S.~Goswami, C.~Siettos, and I.~G.~Kevrekidis.
Enabling local neural operators to perform equation-free system-level analysis.
\emph{Nature Machine Intelligence}, 8:1127--1141, 2026.
\url{https://doi.org/10.1038/s42256-026-01265-1}.

\bibitem[Li et~al.(2025)]{li2025geo}
X.~Li, Z.~Li, N.~Kovachki, and A.~Anandkumar.
Geometric operator learning with optimal transport.
\emph{Journal of Machine Learning Research}, 26(282):1--38, 2025.

\bibitem[Lu et~al.(2021)]{lu2021deeponet}
L.~Lu, P.~Jin, G.~Pang, Z.~Zhang, and G.~E.~Karniadakis.
Learning nonlinear operators via DeepONet based on the universal approximation theorem of operators.
\emph{Nature Machine Intelligence}, 3:218--229, 2021.
\url{https://doi.org/10.1038/s42256-021-00302-5}.

\bibitem[Boya and Subramani(2025)]{pinto2025}
S.~K.~Boya and D.~Subramani.
PINTO: Physics-informed transformer neural operator for learning generalized solutions of partial differential equations for any initial and boundary condition.
\emph{Computer Physics Communications}, 315:109702, 2025.
\url{https://doi.org/10.1016/j.cpc.2025.109702}.

\bibitem[Lubich(1988)]{lubich1988}
C.~Lubich.
Convolution quadrature and discretized operational calculus. I.
\emph{Numer. Math.}, 52:129--145, 1988.

\bibitem[Ogata(1988)]{ogata1988}
Y.~Ogata.
Statistical models for earthquake occurrences and residual analysis for
point processes.
\emph{J. Amer. Statist. Assoc.}, 83(401):9--27, 1988.

\bibitem[Ruiz et~al.(2020)]{ruiz2020}
L.~Ruiz, L.~F.~O. Chamon, and A.~Ribeiro.
Graphon neural networks and the transferability of graph neural
networks.
\emph{NeurIPS}, 2020.

\bibitem[Ruiz et~al.(2023)]{ruiz2023}
L.~Ruiz, L.~F.~O. Chamon, and A.~Ribeiro.
Transferability properties of graph neural networks.
\emph{IEEE Transactions on Signal Processing}, 71:3474--3489, 2023.
\url{https://doi.org/10.1109/TSP.2023.3297848}.

\bibitem[Stahl(2003)]{stahl2003}
H.~Stahl.
Best uniform rational approximation of $x^\alpha$ on $[0,1]$.
\emph{Acta Math.}, 190(2):241--306, 2003.
\url{https://doi.org/10.1007/BF02392691}.

\bibitem[Trefethen(2013)]{trefethen}
L.~N. Trefethen.
\emph{Approximation Theory and Approximation Practice}.
SIAM, 2013.

\end{thebibliography}
\end{document}